\documentclass{article} %
\usepackage{iclr2023_conference,times}

\usepackage{amsmath,amsfonts,bm}

\def\eqref#1{equation~\ref{#1}}
\def\Eqref#1{Equation~\ref{#1}}

\def\1{\bm{1}}

\newcommand{\planardyadic}{\mathbf{P}}
\newcommand{\inertiadyadic}{\mathbf{I}}
\newcommand{\code}[1]{{\small \texttt{#1}}}

\def\rvp{{\mathbf{p}}}

\def\rvx{{\mathbf{x}}}

\def\vtheta{{\bm{\theta}}}
\def\va{{\bm{a}}}
\def\vb{{\bm{b}}}
\def\vc{{\bm{c}}}

\def\vh{{\bm{h}}}

\def\vm{{\bm{m}}}

\def\vo{{\bm{o}}}
\def\vp{{\bm{p}}}

\def\vt{{\bm{t}}}
\def\vu{{\bm{u}}}
\def\vv{{\bm{v}}}

\def\vx{{\bm{x}}}

\def\vz{{\bm{z}}}

\def\mA{{\bm{A}}}

\def\mC{{\bm{C}}}

\def\mH{{\bm{H}}}
\def\mI{{\bm{I}}}

\def\mR{{\bm{R}}}
\def\mS{{\bm{S}}}
\def\mT{{\bm{T}}}

\def\mV{{\bm{V}}}

\def\mX{{\bm{X}}}

\def\mZ{{\bm{Z}}}

\def\mPhi{{\bm{\Phi}}}

\DeclareMathAlphabet{\mathsfit}{\encodingdefault}{\sfdefault}{m}{sl}
\SetMathAlphabet{\mathsfit}{bold}{\encodingdefault}{\sfdefault}{bx}{n}

\def\sD{{\mathbb{D}}}

\def\sP{{\mathbb{P}}}

\def\sT{{\mathbb{T}}}
\def\sU{{\mathbb{U}}}

\newcommand{\E}{\mathbb{E}}

\newcommand{\R}{\mathbb{R}}

\usepackage{amsthm}
\newtheorem{proposition}{Proposition}

\newcommand{\vepsilon}{\bm{\varepsilon}}

\newcommand{\vlambda}{\bm{\lambda}}
\newcommand{\mPi}{\bm{\Pi}}

\DeclareMathOperator{\diag}{diag}
\DeclareMathOperator{\vectorize}{vec}

\definecolor{mydarkblue}{rgb}{0,0.08,0.45}
\usepackage[colorlinks=true,linkcolor=mydarkblue,urlcolor=mydarkblue,citecolor=mydarkblue,filecolor=mydarkblue]{hyperref}
\usepackage{xurl}
\usepackage{cancel}
\usepackage{graphicx}
\usepackage{subcaption}

\usepackage{microtype}
\usepackage[english]{babel}
\usepackage{booktabs}
\usepackage{bbm}
\usepackage{amssymb}
\usepackage{xcolor}
\usepackage{soul}

\usepackage{algorithm}
\usepackage{algpseudocode}
\usepackage{algorithmicx}

\usepackage{multicol}
\usepackage{multirow}

\title{Reflection-Equivariant Diffusion for 3D \\ Structure Determination from Isotopologue \\ Rotational Spectra in Natural Abundance}

\author{Austin H. Cheng$^{1,2,3}$\thanks{Equal contribution. \texttt{<austin@cs.toronto.edu, alston.lo@mail.utoronto.ca>}} \ \ Alston Lo$^{2,3*}$ \ \ Santiago Miret$^{4}$ \ \ Brooks H. Pate$^{5}$ \ \ Alán Aspuru-Guzik$^{1,2,3}$\thanks{Acceleration Consortium. Department of Chemical Engineering and Applied Chemistry, University of Toronto. Department of Materials Science and Engineering, University of Toronto. Lebovic Fellow, Canadian Institute for Advanced Research (CIFAR).} \\
    $^1$Department of Chemistry, University of Toronto  \\
    $^2$Department of Computer Science, University of Toronto  \\
    $^3$Vector Institute  \ \
    $^4$Intel Labs \ \
    $^5$Department of Chemistry, University of Virginia \\
    \url{https://github.com/aspuru-guzik-group/kreed}
}

\graphicspath{ {./figures/} }

\makeatletter
\newcommand{\linethrough}{\mathpalette\@thickbar}
\newcommand{\@thickbar}[2]{{#1\mkern0mu\vbox{
    \sbox\z@{$#1#2\mkern-1.5mu$}%
    \dimen@=\dimexpr\ht\tw@-\ht\z@+2\p@\relax %
    \hrule\@height0.5\p@ %
    \vskip\dimen@
    \box\z@}}
}
\makeatother

\newcommand{\name}{\textsc{Kreed}}
\newcommand{\namex}{(\underline{K}raitchman \underline{RE}flection-\underline{E}quivariant \underline{D}iffusion)}

\iclrfinalcopy %
\begin{document}

\maketitle
\begin{abstract}

Structure determination is necessary to identify unknown organic molecules, such as those in natural products, forensic samples, the interstellar medium, and laboratory syntheses.
Rotational spectroscopy enables structure determination by providing accurate 3D information about small organic molecules via their moments of inertia.
Using these moments, Kraitchman analysis determines isotopic substitution coordinates, which are the unsigned $|x|,|y|,|z|$ coordinates of all atoms with natural isotopic abundance, including carbon, nitrogen, and oxygen.
While unsigned substitution coordinates can verify guesses of structures, the missing $+/-$ signs make it challenging to determine the actual structure from the substitution coordinates alone.
To tackle this inverse problem, we develop \name{} \namex{}, a generative diffusion model that infers a molecule's complete 3D structure from its molecular formula, moments of inertia, and unsigned substitution coordinates of heavy atoms.
\name{}'s top-1 predictions identify the correct 3D structure with $>$98\% accuracy on the QM9 and GEOM datasets when provided with substitution coordinates of all heavy atoms with natural isotopic abundance.
When substitution coordinates are restricted to only a subset of carbons, accuracy is retained at 91\% on QM9 and 32\% on GEOM.
On a test set of experimentally measured substitution coordinates gathered from the literature, \name{} predicts the correct all-atom 3D structure in 25 of 33 cases, demonstrating experimental applicability for context-free 3D structure determination with rotational spectroscopy.

\end{abstract}

\section{Introduction}

Rotational spectroscopy provides rich 3D structural information about molecules via their principal moments of inertia $I_X, I_Y, I_Z$ \citep{gordy1984microwave, townes2013microwave}.
Since a small percentage of atoms exist as isotopes in natural abundance (e.g., about 1\% of carbon atoms are carbon-13), a chemical sample contains isotopically substituted versions of the molecule of interest. 
These isotopologues are chemically identical to the original parent molecule, but have an atom substituted with another isotope, so they have perturbed moments of inertia $I_X^*, I_Y^*, I_Z^*$.
The change in moments caused by the isotopic substitution reveals information about the position of the substituted atom.
Indeed, \cite{kraitchman1953determination} provides an expression which takes in parent and isotopologue moments and outputs the unsigned $|x|,|y|,|z|$ positions of the substituted atom in the isotopologue.
The coordinates are defined in the molecule's principal axis system, where the origin is the weighted center-of-mass and the axes are the principal axes of rotation.
This analysis can be repeated for all possible isotopologues to obtain the unsigned substitution coordinates for all atoms which have natural isotopic abundance, including carbon, nitrogen, and oxygen.

\begin{figure}[t]
\begin{center}
\includegraphics[width=\textwidth]{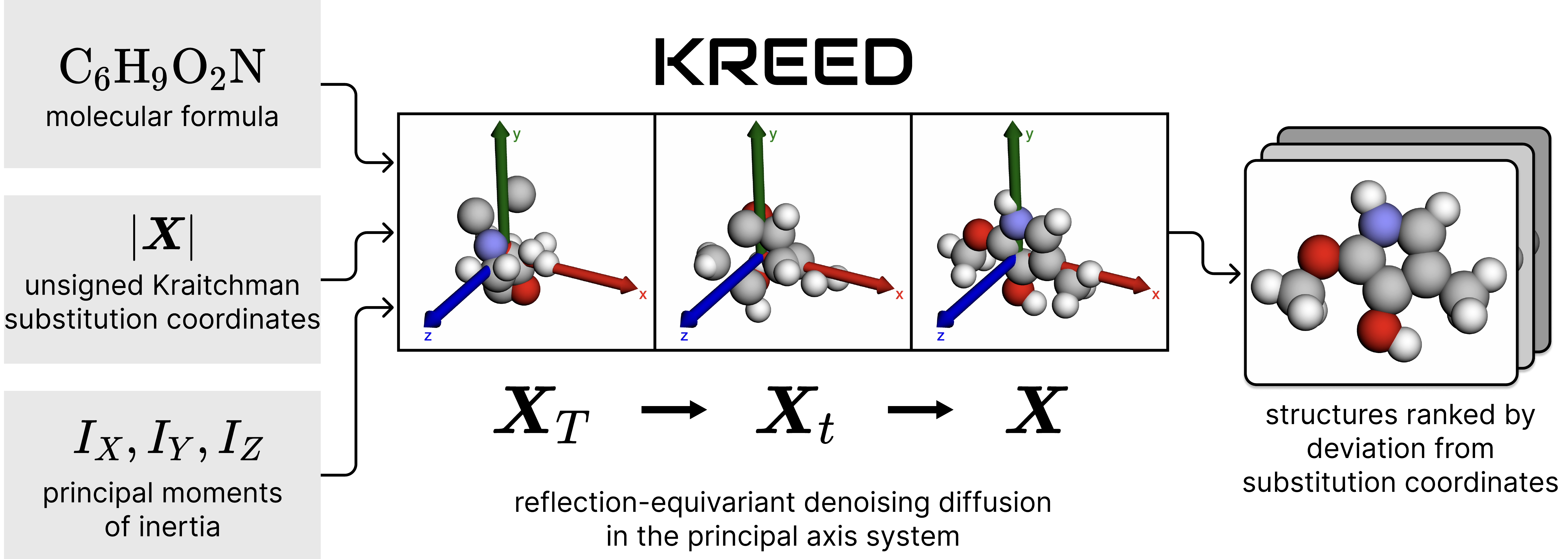}
\caption{\name{} takes as input molecular formula, Kraitchman's substitution coordinates, and principal moments of inertia \emph{(left)} and runs a learned reverse diffusion process of Euclidean steps in the molecule's principal axis system \emph{(center)} to obtain a ranked list of structures \emph{(right)}.}
\label{fig:overview}
\end{center}
\end{figure}

While unsigned substitution coordinates can easily verify and refine predicted 3D structures calculated from quantum chemistry \citep{brown2006rotational, shipman2011structure, seifert2015autofit}, it is a difficult problem to infer the full 3D structure from just the substitution coordinates, even if the molecular formula is given.
Since each atom lies in one of 8 octants, the measured substitution coordinates of a set of $m$ atoms are consistent with $8^{m -1}$ possible arrangements (fix the first atom), but only one of these makes up the true 3D structure.
Furthermore, substitution coordinates are not available for atoms without natural isotopic abundance (e.g., hydrogen, fluorine, and phosphorus).
On top of that, zero-point vibrational effects prevent accurate measurement of substitution coordinates lying close to a principal axis.

To solve this inverse problem, we develop \name{} \namex{}, an equivariant diffusion model that infers the equilibrium 3D structure of organic molecules given only the molecular formula, unsigned substitution coordinates, and principal moments of inertia (Figure \ref{fig:overview}).
We define a diffusion process where all atoms take Euclidean steps in the principal axis system of the molecule and train a denoiser network to reverse this diffusion conditioned on atom types, substitution coordinates, and moments of inertia. Since the principal axis system is defined up to sign-flips, we relax the equivariance of the denoiser from E(3)- to reflection-equivariance for axially-aligned reflections across the $xy,yz,xz$ planes.
We find it is not necessary to enforce strict adherence to the substitution coordinates during the diffusion process, so generated structures are instead ranked by deviation from substitution coordinates.
Because not all substitution coordinates are measurable, we perform data augmentation whereby a percentage of the substitution coordinates are randomly dropped during training.

Our approach enables context-free 3D structure determination, meaning that structure can be determined without any information of atom connectivity, SMILES string, or initial geometry.
Aside from the substitution coordinates determined from rotational spectroscopy, our method requires only the molecular formula, which can be determined with high-resolution mass spectrometry \citep{marshall2008high}.
Given this information, \name{} generates a ranked list of candidate all-atom 3D structures. Top-1 predictions identify the correct 3D structure with $>$98\% accuracy on the QM9 \citep{ramakrishnan2014quantum} and GEOM \citep{axelrod2022geom} datasets when given the substitution coordinates of all heavy atoms with natural isotopic abundance.
When provided with the substitution coordinates for only carbon, and only 90\% of them, top-1 predictions identify the correct 3D structure 91\% of the time for QM9 and 32\% of the time for GEOM.
We validate \name{} on a test set of experimentally measured substitution coordinates we gathered from the literature and find the model predicts the correct all-atom 3D structure in 25 of 33 cases, demonstrating potential for context-free 3D structure determination with rotational spectroscopy.

\clearpage
\section{Background and Problem Setup}

For a molecule whose 2D and 3D structure are both unknown, we wish to determine the all-atom 3D structure of its lowest energy conformer.
We begin by making five assumptions: 
\textbf{(1)} The molecule's formula can be determined from high resolution mass spectrometry \citep{marshall2008high}.
\textbf{(2)} The molecule's structure is well-approximated as a rigid rotor, so that its rotational spectrum is entirely described by its rotational constants $A, B, C$.
\textbf{(3)} The molecule is an asymmetric top (i.e.,  $A > B > C$) so that there are no molecular symmetries to exploit, which holds for the vast majority of molecules. 
\textbf{(4)} The molecule has a permanent dipole moment so that its rotational spectrum can be measured.
Finally, \textbf{(5)} the rotational constants of the molecule and many of its naturally-abundant isotopologues can be assigned in a context-free manner without any prior guess of 3D geometry.

While quickly and reliably assigning rotational constants from a given rotational spectrum is not a solved problem, significant progress has been made. Automated fitting tools such as AUTOFIT \citep{seifert2015autofit} and PGOPHER \citep{western2017pgopher, western2019automatic} have increased the speed of assigning spectra, while genetic algorithms \citep{leo2006application} and neural networks \citep{zaleski2018automated} have been applied to automatically assign spectra. In particular, \cite{yeh2019automated} provide an algorithm for context-free assignment of rotational spectra, which does not require prior guesses of rotational constants.

Given parent and isotopologue rotational constants, we can convert them to planar moments of inertia $P_X > P_Y > P_Z$, which are the eigenvalues of the planar dyadic
(Appendix \ref{appendix:inertia}):
\begin{equation}\label{eq:moments}
\begin{gathered}
 I_X = \frac{h}{8\pi^2 A}, \quad I_Y = \frac{h}{8\pi^2 B}, \quad I_Z = \frac{h}{8\pi^2 C}, \\
 P_\zeta = \frac{1}{2}(I_X + I_Y + I_Z) - I_ \zeta, \quad \text{for } \zeta \in \{X, Y, Z\},
\end{gathered}
\end{equation}
where $h$ is Planck's constant.
These moments are effective moments because measured rotational constants are averaged over the ground vibrational state, but their values are sufficiently accurate for our purposes.

Now, consider an isotopologue where a single atom of the parent molecule has been isotopically substituted. This substitution perturbs the parent molecule's mass $M$ by some $m_\Delta$ and produces a new triplet of planar moments $P_X^\ast, P_Y^\ast, P_Z^\ast$. Kraitchman's equations \citep{kraitchman1953determination} relate the parent and isotopologue moments to the coordinates of the substituted atom $(x, y, z)$, but only up to a sign:
\begin{equation}\label{eq:kraitchmaneqn}
    \begin{gathered}
    |x| = \sqrt{\left(\cfrac{M + m_\Delta}{M m_\Delta}\right) \frac{(P_X^\ast - P_X)(P_Y^\ast - P_X)(P_Z^\ast - P_X)}{(P_Y - P_X)(P_Z - P_X)}},\\
    \vphantom{\sum} \text{ with } |y| \text{ and } |z| \text{ by cyclic permutation of } X, Y, Z.
    \end{gathered}
\end{equation}
These equations are derived from how isotopic substitution predictably shifts and rotates the principal axes of a molecule (Figure \ref{fig:large-overview}, \emph{top-left}).
Since effective moments were used in Kraitchman's equations, these substitution coordinates are susceptible to errors arising from zero-point vibrational effects, so that they are systematically smaller in magnitude than the true equilibrium coordinates \citep{kraitchman1953determination}. For coordinates lying near a principal axis, these errors can be so severe that \Eqref{eq:kraitchmaneqn} produces an imaginary value. Accordingly, any unavailable substitution coordinates, either due to low natural isotopic abundance or imaginary values, are set to null values.

Finally, isotopologue rotational constants need to be matched to the type of atom that was isotopically substituted (e.g., determine whether a set of rotational constants comes from a \textsuperscript{13}C or \textsuperscript{18}O isotopologue). In the context-free setting, this matching can be determined by examining line intensities of the isotopologues and comparing to natural abundance ratios. Alternatively, if the molecular formula is not too complex, one can match isotopologues to atom types by brute force.

In summary, the available information that will be provided are \textbf{(1)} a molecular formula, \textbf{(2)} a possibly incomplete set of unsigned substitution coordinates, and \textbf{(3)} the principal planar moments of inertia of the parent molecule. In turn, we wish to obtain a ranked list of all-atom 3D structures predicted to be the true equilibrium structure.

\newpage

\textbf{Related work.} 
\cite{mayer2019feasibility} searches through heavy atom frameworks consistent with a set of substitution coordinates, and is emulated by our genetic algorithm baseline.
\cite{mccarthy2020molecule} present a probabilistic machine learning framework for determining structures from a small set of spectroscopic parameters.
\cite{yeh2021progress} propose two methods for resolving sign ambiguities from Kraitchman's substitution coordinates using measurements of doubly-substituted isotopologues, electric dipole moments, and magnetic g-factors. 
However, as all these methods are proof-of-concept, they do not demonstrate success on large datasets.

\vspace*{\fill}

\begin{figure}[H]
\begin{center}
\includegraphics[width=0.9\textwidth]{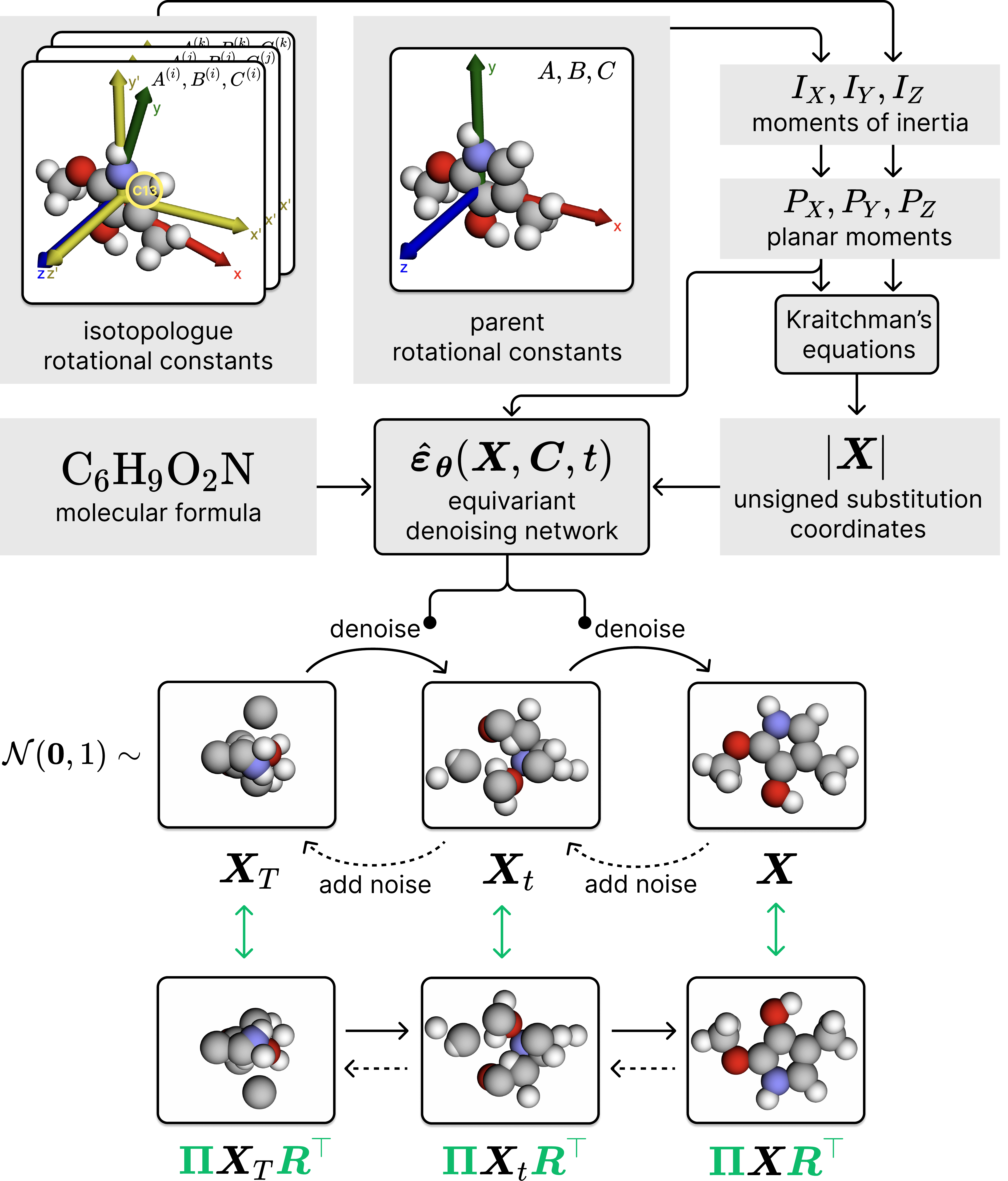}
\caption{\emph{Top:} Isotopologues have systematically shifted and rotated principal axes (yellow) relative to the parent's principal axes (RGB). The effect of isotopic substitution is magnified by 50$\times$ for visualization. Parent and isotopologue rotational constants are converted to planar moments of inertia and then to unsigned substitution coordinates using Kraitchman's equations. Given molecular formula, substitution coordinates, and planar moments of inertia of the parent molecule, \name{} learns to denoise random point clouds into all-atom 3D structures of the molecule. \emph{Bottom:} The denoiser model $\hat{\vepsilon}_\vtheta$ is equivariant with respect to axially-aligned reflections $\mR$ and node permutations $\mPi$, so that the modelled distribution $p_\vtheta(\mX|\mC)$ under the diffusion model is invariant to such transformations.}
\label{fig:large-overview}
\end{center}
\end{figure}

\vspace*{\fill}
\newpage 

\section{Approach}

The inputs of molecular formula, substitution coordinates, and moments of inertia specify a set of incomplete constraints on the true equilibrium structure of a molecule, as many unphysical point clouds could also satisfy these conditions within a certain tolerance.
However, molecules obey several steric and electronic rules known to chemists: atoms must have satisfactory valency, bonds must be of adequate length, and bond angles must minimize strain.
We leverage advances in diffusion models \citep{sohl2015deep, ho2020denoising} to learn and incorporate these constraints from data.
Indeed, diffusion models have already demonstrated success on several 3D molecular tasks, including conformer search \citep{xu2022geodiff, jing2022torsional}, docking \citep{corso2022diffdock}, and unconditional generation \citep{hoogeboom2022equivariant, schneuing2022structure, vignac2023midi, xu2023geometric}.
The basic idea behind diffusion models is that the true data distribution $p_{\text{data}}(\mX|\mC)$ can be transformed into a standard Gaussian distribution by progressively adding Gaussian noise across multiple timesteps $t$.
By training a neural network $\hat{\vepsilon}_\vtheta$ to predict the reverse of these diffusion steps, random noise can be iteratively denoised into samples from the original data distribution.
We recall the standard formulation of a diffusion model in Appendix \ref{appendix:diffusion}. 

For an unknown $N$-atom molecule, let $\mC$ be a node feature matrix encoding the prior information that is available. Concretely, we concatenate the following features of the molecule: 
\begin{equation}
\mC = 
\bigg(
\,\va \;\bigg|\; \vm \;\bigg|\; |\mX| \;\bigg|\; \mS \;\bigg|\; P_X, P_Y, P_Z\,
\bigg) \in \R^{N \times 11},
\end{equation}
where $\va$ are its atomic numbers, $\vm$ are its atomic masses, $|\mX|$ are its unsigned substitution coordinates (or $0$ where not given), $\mS$ is a binary mask that indicates which elements of $|\mX|$ are missing, and $P_X, P_Y, P_Z$ are its planar moments of inertia for the parent isotopologue (repeated along each row).
Then we aim to learn the true distribution of 3D conformations $\mX \in \R^{N \times 3}$ conditioned on $\mC$ with a diffusion model $p_\vtheta(\mX|\mC)$.

\subsection{Diffusion in the Principal Axis System}

Molecules in 3D contain geometric symmetries under translations, rotations, and reordering of atoms, which 
our diffusion model should account for to ensure good generalization \citep{elesedy2021provably}. 
A natural approach is to consider the principal axis system of the molecule, which is the coordinate system whose origin is the mass-weighted center-of-mass (CoM) and whose axes are the principal axes of rotation, or the orthonormal eigenvectors of the inertia matrix (Appendix \ref{appendix:inertia}).
This choice is motivated by the substitution coordinates being given in the principal axis system.
In particular, fixing the origin to the CoM removes translational symmetries \citep{xu2022geodiff}, while aligning the molecule to its principal axes reduces symmetries under rotoreflections to axially-aligned reflections across the $xy,yz,xz$ planes.
These reflectional symmetries persist since if $\vv$ is an eigenvector of the inertia matrix then so is $-\vv$, so the principal axes are only unique up to sign flips.
We then model a reflection-invariant distribution using a reflection-equivariant denoiser (Figure \ref{fig:large-overview}, \emph{bottom}).

\subsubsection{Zero CoM Subspace}

Fixing the CoM to the origin amounts to performing the diffusion process in the $3(N-1)$-dimensional linear subspace
\begin{equation}
    \sU = \left\{\mX = (\vx_1, \ldots, \vx_N)^\top \in \R^{N \times 3} \;\middle|\; \frac{1}{\sum_{j = 1}^N m_j}\sum_{i = 1}^N m_i\vx_i = \bm{0} \right\}. 
\end{equation}
Unlike $\R^{N \times 3}$, points in $\sU$ cannot be superimposed by translations. To diffuse over $\sU$, we require two minor changes from a regular diffusion model on $\R^{N \times 3}$ (Appendix \ref{appendix:diffusion}): \textbf{(1)} whenever a sample is drawn from a Gaussian distribution, it must be orthogonally projected onto $\sU$, and \textbf{(2)} the output of the network $\hat{\vepsilon}_\vtheta$ is restricted to $\sU$. This approach is similar to previous works which fix the \textit{unweighted} CoM at $\bm{0}$ \citep{xu2022geodiff, hoogeboom2022equivariant}, with the added difference that an orthogonal projection onto $\sU$ is not equivalent to simply removing the weighted CoM. Appendix Algorithms \ref{alg:train_step} and \ref{alg:sampling} summarize the modified procedures for training and sampling. Appendix \ref{appendix:Zero CoM Subspaces} formalizes and proves these claims.

\subsubsection{Axially-aligned Reflection Invariance}
Once the CoM has been fixed to the origin, diffusion takes place in the principal axis system of the true molecule by construction.
The remaining symmetries are stated as follows. For all axially-aligned reflections $\mR \in \{\diag(\vb) \mid \vb \in \{-1, +1\}^3 \}$ and node permutations $\mPi$ that map $\sU$ onto itself,
we desire that:
\begin{equation}
    p_\vtheta(\mPi\mX\mR^\top|\mPi\mC) = p_{\vtheta}(\mX|\mC).
\end{equation}
That is, the modelled distribution is invariant to axially-aligned reflections of $\mX$ and simultaneous reorderings of $\mX$ and $\mC$. This is satisfied if the denoiser network $\hat{\vepsilon}_\vtheta$ is correspondingly equivariant (Appendix \ref{appendix:invariance_and_equivariance}):
\begin{equation}
    \hat{\vepsilon}_\vtheta(\mPi\mX\mR^\top, \mPi\mC, t) = \mPi\hat{\vepsilon}_\vtheta(\mX, \mC, t)\mR^\top.
    \label{eq:network_equivariance}
\end{equation}
To ensure this, we implement $\hat{\vepsilon}_\vtheta$ with an architecture inspired from E($n$)-equivariant graph neural networks (EGNNs) \citep{satorras2021en}.
In the $n = 3$ case, EGNNs are designed to be equivariant under the E(3) group, which is the group of rigid 3D motions consisting of translations, reflections, rotations, and combinations thereof.
By modifying the EGNN with edge features that are reflection- but not E(3)-invariant, we relax its E(3)- to reflection-equivariance. 
We also found that incorporating Transformer-like elements \citep{vaswani2017attention} improved training dynamics and stability.
Further architectural details are given in Appendix \ref{appendix:architecture}. 

Frame averaging \citep{puny2021frame, duval2023faenet} arises as an unexpected connection to our approach.
This method ensures E(3)-equivariance by averaging over the frame of the E(3) group, where the frame consists of the $2^3$ possible point clouds that are aligned to the principal components of the point clouds and have the same \emph{unweighted} CoM.
However, instead of aligning to principal components, we align to principal axes of rotation and fix the \emph{weighted} CoM to be zero.

\subsection{Training and Inference}

During training, we process all examples so that they sit in their principal axis system. Simultaneously, we can easily compute the moments of inertia of the molecule and compute unsigned substitution coordinates by discarding signs (Appendix Algorithm \ref{alg:get_unsigned_coords}).
We only use substitution coordinates of atoms with naturally abundant isotopes, which for our datasets includes B, C, N, O, Si, S, Cl, Br, Hg and excludes H, F, Al, P, As, I, Bi.
However, not all of these substitution coordinates are available in practice due to low natural isotopic abundance or zero-point vibrational effects.
To mimic this effect, we further apply a random dropout with probability $p$ on the substitution coordinates, with $p$ itself being uniformly sampled from an interval $[p_{\textrm{min}}, p_{\textrm{max}}] \subseteq [0, 1]$ for each training example.
At inference time, we sample $K$ predictions for each example and rank predictions by deviation of their unsigned coordinates to the original unsigned substitution coordinates.
Additional details on training and sampling, including an explicit inference workflow, are available in Appendix \ref{appendix:training}.

\section{Experiments}

\textbf{Baseline.} To determine the effectiveness of search-based methods for solving this problem, we develop a genetic algorithm that searches over heavy atom frameworks by finding binary $+/-$ signs for heavy atoms with specified unsigned substitution coordinates and continuous positions for heavy atoms without.
The search is guided by a fitness function which measures agreement with zero CoM and the provided moments, combined with a likelihood term that is simply a pair distribution function of heavy atoms in the training set, inspired by \cite{mayer2019feasibility}.
Hydrogens are added afterwards using Hydride \citep{kunzmann2022adding}.
More details are available in Appendix \ref{appendix:baseline}.

\textbf{Datasets.} QM9 is an enumeration of 134k single-conformer molecules containing C, H, O, N, F with up to 9 heavy atoms calculated at the B3LYP/6-31G(2df,p) level of theory.
GEOM is a dataset of 292k drug-like molecules with conformers calculated by CREST \citep{pracht2020automated} at a semiempirical extended tight-binding level GFN2-xTB \citep{bannwarth2019gfn2}.
For GEOM, we use only the 30 lowest-energy conformers for each molecule, totalling 6.9M conformers.
Each dataset is then partitioned into training, validation, and test sets using an 80:10:10\% random split by molecule.

To provide a more realistic benchmark, we also drop substitution coordinates of all non-carbon atoms, followed by dropping 10\% of carbon substitution coordinates.
This emulates the fact that non-carbon isotopologues are sometimes difficult to detect in a spectrum due to low but non-zero abundance.
For example, the isotopic abundances of nitrogen and oxygen are respectively 2.5$\times$ and 5$\times$ lower than that for carbon.
We label these tasks as QM9-C and GEOM-C, both of which drop approximately 34\% of the original naturally-abundant isotopic substitution coordinates.

One model was trained on QM9 and tested on the QM9 and QM9-C tasks, and another model was trained on GEOM and tested on the GEOM and GEOM-C tasks.
For the test set of GEOM, only predictions for the lowest-energy conformer were evaluated, as the lowest-energy conformer will have the highest population in experiments.
For each test example, $K=10$ samples were generated and ranked by deviation from unsigned substitution coordinates.

\textbf{Metrics.} We evaluate generated samples in terms of connectivity correctness and all-atom RMSD.
A prediction is connectivity correct if RDKit's xyz2mol {\small\texttt{rdkit.Chem.rdDetermineBonds}} \citep{kim2015universal} returns the same SMILES string for both the prediction and the ground truth.
Connectivity correctness implies that the prediction and ground truth have the same bond connectivity, but may not necessarily be the same enantiomer or diastereomer.
We additionally measure each prediction's all-atom RMSD after alignment to the ground truth via minimum RMSD over all $2^3$ possible reflections.
Since our diffusion model is permutation invariant, node orderings are not preserved, and we must find an assignment of each atom in the ground truth to each atom in the predicted sample.
To do so, for each reflection, we solve a linear assignment problem \citep{crouse2016implementing} to match atoms that are close in space and have the same atom type but may not have the same node index.

\begin{table}
  \centering
  \setlength{\tabcolsep}{10pt}
  \begin{tabular}{c l c c c c}
    \toprule
    & & \multicolumn{2}{c}{\textbf{Correctness (\%)}}  & \multicolumn{2}{c}{\textbf{Median RMSD (\AA)}}  \\
    \textbf{Method} & \textbf{Task} & $k = 1$ & $k = 5$ & $k = 1$ & $k = 5$ \\
    \midrule %
    & QM9 & 7.33 & 12.4 & 0.842 & 0.546  \\
    Genetic & QM9-C & 0.127 & 0.225 & 1.29 & 1.05 \\
    algorithm & GEOM & 0.0308 & 0.0377 & 2.05 & 1.77 \\
    & GEOM-C & 0.00342 & 0.00685 & 2.31 & 2.08 \\
    \midrule %
    \multirow{4}{*}{\name{}} & QM9 & 99.9 & 99.9 & 0.00626 & 0.00625 \\
     & QM9-C & 91.3 & 93.1 & 0.00935 & 0.00882 \\
     & GEOM & 98.9 & 99.2 & 0.00617 & 0.00616 \\
    & GEOM-C & 32.6 & 35.8 & 1.17 & 0.765 \\
    \bottomrule
  \end{tabular}
  \caption{The genetic algorithm and \name{} are benchmarked on the test sets of QM9 and GEOM. Performance is measured by connectivity correctness and median all-atom RMSD.}
  \label{table:results}
\end{table}
\begin{figure}[t]
\begin{center}
\includegraphics[width=\textwidth]{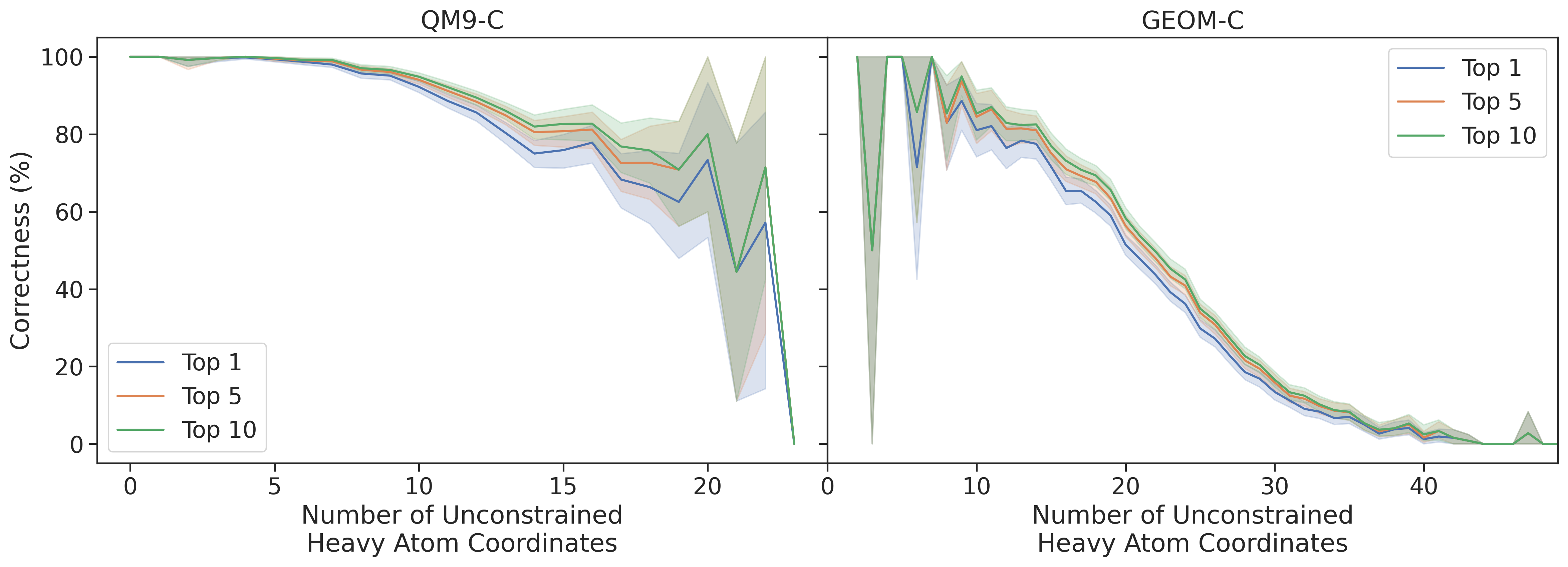}
\caption{Average connectivity correctness of \name{} on QM9-C and GEOM-C for samples with different numbers of unconstrained heavy atom coordinates. Shading shows the 95\% confidence interval. The fewer the number of substitution coordinates that are provided, the greater the number of unconstrained heavy atom coordinates, and the greater the difficulty of the task, as shown by decreases in average correctness.}
\label{fig:accuracy}
\end{center}
\end{figure}

\textbf{Accuracy.} Table \ref{table:results} indicates that \name{} can predict structures with near perfect accuracy on both QM9 and GEOM when provided with all naturally abundant substitution coordinates, even without hydrogen, fluorine, and phosphorus.
When restricting the provided substitution coordinates, the model retains 91.3\% top-1 accuracy for QM9-C, while performance drops to 32.6\% top-1 accuracy for GEOM-C.
In comparison, performance by the genetic algorithm is poor.
While this simple baseline demonstrates appreciable results for QM9, even achieving a top-10 heavy-atom-correctness of 46.4\% (Appendix Table \ref{table:extra_results}), it is not able to predict structures from GEOM.
Low baseline performance is likely due to inefficient generation of continuous coordinates and a poorly adapted fitness function for larger molecules.
Additional top-$k$ metrics and visualized top-1 predictions for all methods and tasks are shown in Appendix \ref{appendix:results}.

The significant drop in performance when moving from GEOM to GEOM-C is in line with the fact that GEOM contains molecules with significantly more atoms than QM9: QM9 contains an average of 8.8 heavy atoms per molecule, while GEOM contains an average of 24.8 heavy atoms per molecule.
In Figure \ref{fig:accuracy}, we see that the difficulty of an example is correlated with the number of unconstrained heavy atom coordinates, which is the number of heavy atom coordinates for which substitution coordinates are \emph{not} provided.
Indeed, further experiments found that even if provided with \emph{only} molecular formula and moments, \name{} can still obtain reasonable accuracy on QM9 (48.8\% top-100 accuracy), owing to the small size of molecules in QM9.
We discuss these results in Appendix \ref{appendix:no_sub}.

\begin{figure}
\begin{center}
\includegraphics[width=\textwidth]{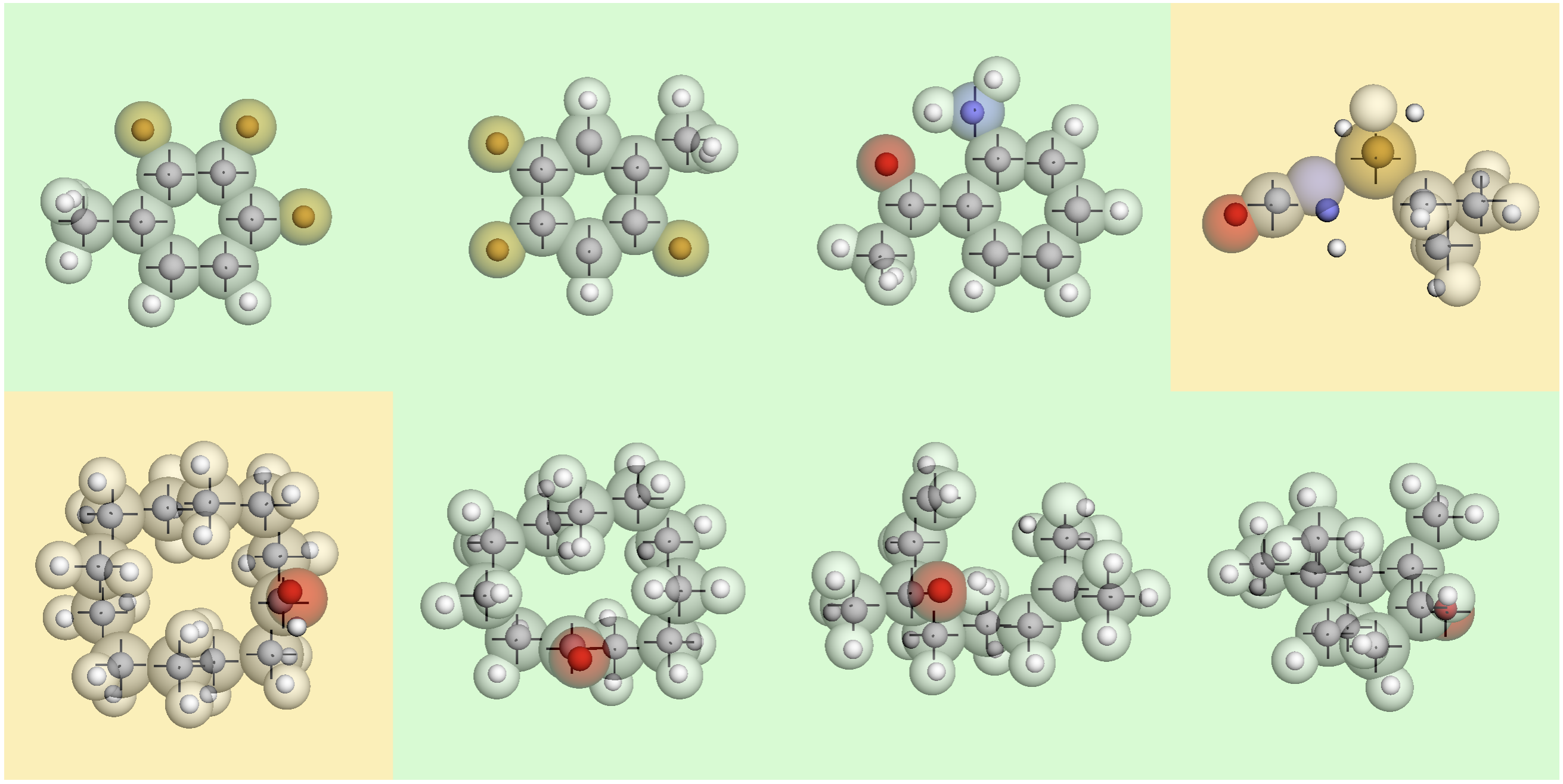}
\caption{Top-1 predictions for experimental substitution coordinates of molecules which do not appear in QM9 or GEOM. The transparent structure is the ground truth, while the small spheres indicate the top-1 predicted structure. Green indicates all-atom correctness while yellow indicates heavy atom correctness. The presence of black pins indicates whether a substitution coordinate in that direction was available.}
\label{fig:ood-samples}
\end{center}
\end{figure}

\subsection{Experimentally Measured Substitution Coordinates}

To benchmark \name{}'s applicability to experimental data, we extracted from the literature a small dataset of 33 conformers with experimentally measured substitution coordinates.
Experimental measurements are subject to zero-point vibrational effects, which cause small inaccuracies in substitution coordinates.
For these experiments, we generated $K=100$ samples for each test example.
For the 8 examples which do not appear in QM9 or GEOM, our top-1 predictions are shown in Figure \ref{fig:ood-samples}.
\name{} predicts correct heavy atom frameworks for all examples and correct all-atom structures for 6 of 8 examples, demonstrating strong performance even on examples that are both out-of-dataset and subject to zero-point vibrational effects.

Of the remaining molecules, 7 were very similar to examples in QM9 or GEOM, while 18 did not have ground truth Cartesian coordinates provided, so they were manually verified by visual comparison to pictures in the original paper.
In total, 25 of the 33 examples were all-atom correct, while 29 of the 33 examples were correct up to hydrogens.
The robustness of \name{} to inaccuracies due to zero-point vibrational effects may be attributed to its loose conditioning on substitution coordinates: since the model is not forced to have exact agreement with the substitution coordinates, it can account for errors in them.
Top-1 predictions are visualized in Appendix \ref{appendix:lit_results} along with references.

\section{Conclusion}

We introduce \name{}, a reflection-equivariant diffusion model for inferring all-atom 3D structure from molecular formula, Kraitchman's substitution coordinates, and moments of inertia.
We validate our approach on large datasets and obtain high accuracy, especially as more substitution coordinates are provided.
Additionally, we find that \name{} is applicable to experimentally measured substitution coordinates, demonstrating its potential for context-free 3D structure determination.

Our approach relies on being able to determine many of the substitution coordinates accurately.
The main obstacle to this is being able to detect and assign rotational constants to enough isotopologues, even for atoms with low isotopic abundance such as oxygen and nitrogen.
We must also know what type of atom was substituted in each isotopologue, which may be nontrivial to resolve.
Furthermore, our method inherits the constraints of Kraitchman analysis and rotational spectroscopy, such as the rigid rotor approximation, requirement of a permanent dipole moment, and inability to distinguish enantiomers.
Lastly, we require a second instrument to determine molecular formula.

However, we believe that many of these limitations can be addressed in future work.
One promising direction is to explore more powerful methods for conditioning the diffusion model, such as guidance \citep{dhariwal2021diffusion,ho2022classifier}.
Given that analytic expressions for moments of inertia are available, it would also be interesting if the diffusion process could be constrained so that the moments of inertia are exactly preserved by the end of the diffusion process.
This would remove the need for substitution coordinates and enable structure determination solely from moments and molecular formula.
One might be able to sidestep these input requirements altogether by conditioning directly on rotational spectra.
An orthogonal direction could be to evaluate reflection-equivariant diffusion in the principal axis system as a general method for modelling 3D point cloud data.

\subsubsection*{Acknowledgments}
We thank Naruki Yoshikawa, Luca Thiede, and Kin Long Kelvin Lee for helpful discussions, and Andy Cai for help with visualizations.
Resources used in preparing this research were provided by Intel, the Province of Ontario, the Government of Canada through CIFAR, companies sponsoring the Vector Institute ({\small \url{www.vectorinstitute.ai/#partners}}), Calcul Québec, and the Digital Research Alliance of Canada ({\small \url{alliancecan.ca}}).
This research was undertaken thanks in part to funding provided to the University of Toronto’s Acceleration Consortium from the Canada First Research Excellence Fund CFREF-2022-00042.
We acknowledge the Defense Advanced Research Projects Agency (DARPA) under the Accelerated
Molecular Discovery Program under Cooperative Agreement No.\ HR00-\linebreak{}11920027 dated August 1, 2019.
A.A.-G. thanks Anders G.~Frøseth for his generous support. A.A.-G also acknowledges the generous support of Natural Resources Canada and the Canada 150 Research Chairs program.

We acknowledge the Python community \citep{van1995python,oliphant2007python} for developing the core set of tools that enabled this work, including PyTorch \citep{paszke2019pytorch}, PyTorch Lightning \citep{Falcon_PyTorch_Lightning_2019}, DGL \citep{wang2019dgl}, RDKit \citep{greg_landrum_2023_8413907}, py3Dmol \citep{rego20153dmol}, Jupyter \citep{kluyver2016jupyter},
Matplotlib \citep{hunter2007matplotlib}, seaborn \citep{waskom2021seaborn}, NumPy \citep{2020NumPy-Array}, SciPy \citep{2020SciPy-NMeth}, and pandas \citep{The_pandas_development_team_pandas-dev_pandas_Pandas}.

\bibliography{refs}
\bibliographystyle{iclr2023_conference}

\newpage

\appendix
\section{Moments of inertia}
\label{appendix:inertia}

For an $N$-atom molecule, let $m_i$ and $\vx_i \in \R^3$ be the mass and position of the $i$-th atom. Assume also that the coordinates have been centered to have zero CoM, i.e., $\sum_{i = 1}^N m_i\vx_i = \mathbf{0}$. Then the inertia matrix is given by
\begin{equation}\label{eq:inertia}
    \inertiadyadic = \sum_{i = 1}^N m_i\left(\lVert\vx_i\rVert_2^2 \mI_3 - \vx_i \vx_i^\top \right) \in \R^{3 \times 3},
\end{equation}
where $\mI_3$ is the identity matrix. The eigenvalues $I_X, I_Y, I_Z$ of $\inertiadyadic$ are the principal moments of inertia, while its orthonormal eigenvectors form the principal axes of rotation.
A closely related matrix is the planar dyadic
\begin{equation}
    \planardyadic = \sum_{i = 1}^N m_i \vx_i \vx_i^\top \in \R^{3 \times 3},
\label{eq:planar_dyadic}
\end{equation}
which contains the same information but leads to simpler expressions for Kraitchman's equations.
The planar dyadic shares the same eigenvectors as the inertia matrix, and its eigenvalues $P_X, P_Y, P_Z$ are the planar moments of inertia, which are related to $I_X, I_Y, I_Z$ by Equation \ref{eq:moments}.

\section{Denoising Diffusion}
\label{appendix:diffusion}

 At a high level, diffusion models learn to iteratively \textit{denoise} samples drawn from an elementary prior distribution into samples from the desired distribution. Given a real data sample $\vx \in \R^m$, let $\vz_0, \ldots, \vz_T$ be a sequence of increasingly noisier random variables drawn from the isotropic Gaussian distributions
\begin{equation}\label{q_zt_given_xt}
    q(\vz_t|\vx) = \mathcal{N}(\vz_t; \alpha_t\vx, \sigma_t^2),
\end{equation}
where $\alpha_t \in (0, 1)$ are chosen to be decreasing with $\alpha_0 \approx 1$ and $\alpha_T \approx 0$, and $\sigma_t^2 = 1 - \alpha_t^2$. We also require $\alpha_T$ to be sufficiently small such that $\vz_T$ is nearly indistinguishable from random noise, i.e., $q(\vz_T|\vx) \approx \mathcal{N}(\vz_T; \bm{0}, 1)$. Now, we can equivalently model the variables through a Markovian noising process with isotropic Gaussian transitions 
\begin{equation}
    q(\vz_t|\vz_{t - 1}) = \mathcal{N}\!\left(\vz_t; \tilde{\alpha}_t\vz_{t - 1}, \tilde{\sigma}_t^2\right),
\end{equation}
where $\tilde{\alpha}_t = \alpha_t/\alpha_{t - 1}$ and $\tilde{\sigma}_t^2 = \sigma_t^2 - \tilde{\alpha}_t^2 \sigma_{t-1}^2$, and $\alpha_{-1} = 1$ and $\vz_{-1} = \vx$ for the case $t = 0$. In fact, the transition posteriors conditioned on $\vx$ are again isotropic and Gaussian: for $t \geq 1$,
\begin{equation}\label{transition posterior}
    q(\vz_{t - 1}|\vx, \vz_t) =  \mathcal{N}\!\left(\vz_{t - 1};
    \frac{\tilde{\alpha}_t \sigma_{t - 1}^2}{\sigma_t^2}\vz_{t} + \frac{\alpha_{t - 1}\tilde{\sigma}_t^2}{\sigma_t^2}\vx, \frac{\tilde{\sigma}_t^2\sigma_{t - 1}^2}{\sigma_t^2}\right).
\end{equation}
\Eqref{transition posterior} specifies a true denoising process in which we can transform a heavily corrupted sample $\vz_T \sim q(\vz_T|\vx)$ back into its original state $\vx$. 

However, $\vx$ is not known during inference, so instead, diffusion models approximate it with a neural network $\hat{\vx}_\vtheta(\vz_t, \vc, t) \approx \vx$, where $\vtheta$ is the network parameters and $\vc$ denotes any given conditioning information. In practice, it is more common to indirectly parameterize $\hat{\vx}_\vtheta$ via:
\begin{equation}
    \hat{\vx}_\vtheta(\vz_t, \vc, t) = \frac{1}{\alpha_t}\vz_t - \frac{\sigma_t}{\alpha_t}\hat{\vepsilon}_\vtheta(\vz_t, \vc, t),
\end{equation}
which is motivated by reparameterizing \Eqref{q_zt_given_xt} as $\vz_t = \alpha_t\vx + \sigma_t\vepsilon$ for $\vepsilon \sim \mathcal{N}(\vepsilon; \bm{0}, 1)$, and then solving for $\vx$. Informally, this $\vepsilon$-parameterization learns to predict the unscaled noise that was added to a corrupted sample, while the former $\vx$-parameterization learns to directly predict the denoised example itself. The marginal generative distribution under the model is then:
\begin{equation}
     p_\vtheta(\vx|\vc) = p_\vtheta(\vx|\vz_0, \vc)\left( \prod_{t=1}^T p_\vtheta(\vz_{t-1}|\vz_t, \vc) \right)p(\vz_T),
\end{equation}
\begin{align}
    p_\vtheta(\vx|\vz_0, \vc) &= \mathcal{N}\!\left(\vx; \hat{\vx}_\vtheta(\vz_0, \vc, 0), \alpha_0^2\sigma_0^{-2}\right), \label{ptheta_x_given_z0} \\
    p_\vtheta(\vz_{t-1}|\vz_t, \vc) &= q(\vz_{t - 1}|\hat{\vx}_\vtheta(\vz_t, \vc, t), \vz_t), \\
    p(\vz_T | \vc) &= \mathcal{N}(\vz_T; \bm{0}, 1),
\end{align}
which can be sampled from sequentially.\footnote{We follow \cite{hoogeboom2022equivariant} in \Eqref{ptheta_x_given_z0}. } The model is trained by minimizing a squared error loss that is simplified from a variational lower bound on the log-likelihood $\log p_\vtheta(\vx|\vc)$:
\begin{equation}
    \E_{\mathcal{U}(t; 0, \ldots, T),\, \mathcal{N}(\vepsilon;\mathbf{0}, 1)}\left[\lVert \vepsilon - \hat{\vepsilon}_\vtheta(\alpha_t\vx + \sigma_t\vepsilon, \vc, t)\lVert_2^2 \right].
\end{equation}
For derivations of the preceding claims, we refer readers to \cite{ho2020denoising}.

\section{Derivations}

\subsection{Zero CoM Subspaces}\label{appendix:Zero CoM Subspaces}

\newcommand{\subspace}{\sU}
\newcommand{\proj}{\mPhi}

Herein, we will treat point clouds $\mX \in \R^{N \times 3}$ as vectors $\rvx = \vectorize(\mX) \in \R^{3N}$ by concatenating their columns. Assuming \emph{normalized} point masses $\tilde{\vm} \in (0, 1)^N$ that sum to 1, the CoM of $\rvx$ is:
\begin{equation}
    \vx_{\text{CoM}}  = (\mI_3 \otimes \tilde{\vm}^\top)\rvx \in \R^3,
\end{equation}
where $\mI_{3}$ is the identity matrix and $\otimes$ is the Kronecker product. The set of zero CoM point clouds 
\begin{equation}
    \subspace = \{\rvx \in \R^{3N} \mid  \vx_{\text{CoM}} = \mathbf{0} \}
\end{equation}
is an $m$-dimensional linear subspace, where $m = 3(N - 1)$, so there is an isometric isomorphism $\varphi \colon  \R^m \to \subspace$. 
Using $\varphi$, we can establish a diffusion model over $\subspace$ by pushing forward a diffusion model over $\R^m$. In fact, due to structure-preserving properties of $\varphi$, we can train and sample over $\subspace$ without actually realizing $\varphi$ or interacting with the underlying space $\R^m$.

Specifically, training and sampling from a standard diffusion on $\R^m$ (Appendix \ref{appendix:diffusion}) requires only a few types of operations in $\R^m$: \textbf{(1)} taking linear combinations of points, \textbf{(2)} computing the distance between two points, and \textbf{(3)} sampling from isotropic Gaussian distributions. The first two operations can equivalently be done in $\sU$. Formally, if $\vu_i \in \R^m$ and $\rvx_i = \varphi(\vu_i)$ for $1 \leq i \leq k$, then 
\begin{equation}
    \varphi\left(\sum_{i = 1}^k \alpha_i\vu_i\right) = \sum_{i = 1}^k \alpha_i\rvx_i \quad\text{and}\quad \lVert\vu_1 - \vu_2 \rVert_2 = \lVert \rvx_1 - \rvx_2 \rVert_2
\end{equation}
for all $\alpha_i \in \R$, since $\varphi$ is an isomorphism and isometry, respectively. The corresponding operation for \textbf{(3)} is more involved and first requires some propositions.

\begin{proposition}\label{appendix:AphiAphiT}
    Let $\mA_\varphi \in \R^{3N \times m}$ be the matrix of $\varphi$. Then $\mA_\varphi \mA_\varphi^\top \in \R^{3N \times 3N}$ is the matrix $\proj$ for the orthogonal projection of $\R^{3N}$ onto $\subspace$.
\end{proposition}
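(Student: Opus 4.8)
The plan is to show that $\mA_\varphi \mA_\varphi^\top$ satisfies the three defining properties of the orthogonal projector onto $\subspace$: symmetry, idempotency, and having range exactly $\subspace$ (equivalently, restricting to the identity on $\subspace$). The one piece of structure I need to extract first is that $\varphi$ being an \emph{isometric} isomorphism forces $\mA_\varphi$ to have orthonormal columns, i.e. $\mA_\varphi^\top \mA_\varphi = \mI_m$. Everything else then follows from standard linear-algebra manipulations.

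First I would establish $\mA_\varphi^\top \mA_\varphi = \mI_m$. Since $\varphi$ is an isometry, $\lVert \mA_\varphi \vu \rVert_2 = \lVert \vu \rVert_2$ for every $\vu \in \R^m$, so $\vu^\top (\mA_\varphi^\top \mA_\varphi) \vu = \vu^\top \vu$ for all $\vu$. As $\mA_\varphi^\top \mA_\varphi$ is symmetric, a quadratic form that agrees with $\lVert \cdot \rVert_2^2$ everywhere must come from the identity matrix (by polarization of the associated bilinear form, or by evaluating on standard-basis vectors and their pairwise sums), giving $\mA_\varphi^\top \mA_\varphi = \mI_m$.

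With this identity in hand, symmetry of $\mA_\varphi \mA_\varphi^\top$ is immediate from $(\mA_\varphi \mA_\varphi^\top)^\top = \mA_\varphi \mA_\varphi^\top$, and idempotency follows from $(\mA_\varphi \mA_\varphi^\top)^2 = \mA_\varphi (\mA_\varphi^\top \mA_\varphi) \mA_\varphi^\top = \mA_\varphi \mA_\varphi^\top$. For the range condition, note that $\mathrm{range}(\mA_\varphi \mA_\varphi^\top) \subseteq \mathrm{range}(\mA_\varphi) = \subspace$, where the last equality holds because $\varphi$ maps $\R^m$ \emph{onto} $\subspace$. Conversely, any $\rvx \in \subspace$ can be written as $\rvx = \mA_\varphi \vu$ by surjectivity, whence $\mA_\varphi \mA_\varphi^\top \rvx = \mA_\varphi (\mA_\varphi^\top \mA_\varphi) \vu = \mA_\varphi \vu = \rvx$; thus $\mA_\varphi \mA_\varphi^\top$ restricts to the identity on $\subspace$, so its range is all of $\subspace$. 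A symmetric idempotent matrix whose range is $\subspace$ is precisely the orthogonal projector $\proj$ onto $\subspace$, which completes the argument.

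I do not expect a genuine obstacle here, as the statement is a standard fact about matrices with orthonormal columns. The only points requiring care are deriving $\mA_\varphi^\top \mA_\varphi = \mI_m$ rigorously from the isometry hypothesis (rather than assuming orthonormal columns at the outset) and invoking surjectivity of $\varphi$ so that the range is \emph{equal} to $\subspace$ and not merely contained in it; both follow directly from the isometric-isomorphism assumption already recorded in the excerpt.
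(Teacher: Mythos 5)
Your proof is correct. Both arguments rest on the same two ingredients---surjectivity of $\varphi$, which gives $\mathrm{range}(\mA_\varphi) = \sU$, and the isometry property, which gives $\mA_\varphi^\top \mA_\varphi = \mI_m$---but the concluding step differs in character. The paper invokes the classical least-squares formula $\mPhi = \mA_\varphi(\mA_\varphi^\top\mA_\varphi)^{-1}\mA_\varphi^\top$ for orthogonal projection onto a column space and then simplifies the Gram matrix to the identity, whereas you bypass that formula and verify from scratch that $\mA_\varphi\mA_\varphi^\top$ is symmetric, idempotent, and acts as the identity on its range $\sU$, then appeal to the uniqueness of the symmetric idempotent with prescribed range. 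Your route is more self-contained: it effectively re-proves the classical result the paper cites (in the special orthonormal-column case), and you also derive $\mA_\varphi^\top\mA_\varphi = \mI_m$ rigorously by polarization rather than asserting it directly from the isometry hypothesis. The paper's route buys brevity at the cost of leaning on the standard projection formula as a black box; yours buys transparency at the cost of a few extra lines. Both are complete proofs.
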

\begin{proof}
    Since $\varphi$ is an isomorphism, the columns of $\mA_\varphi$ form a basis of $\subspace$. A classical result from linear algebra states that the orthogonal projection matrix can be obtained by 
    \begin{equation}
        \proj = \mA_\varphi(\mA_\varphi^\top\mA_\varphi)^{-1} \mA_\varphi^\top 
        = \mA_\varphi \mA_\varphi^\top.
    \end{equation}
    The final equality follows since $\varphi$ is an isometry, so $\mA_\varphi^\top\mA_\varphi = \mI_m$ is the identity.
\end{proof}
\begin{proposition}\label{appendix:orthoproj}
    Let $\proj\rvx$ be the orthogonal projection of $\rvx \in \R^{3N}$ onto $\subspace$. Then    
    \begin{equation}
        \proj \rvx = \rvx - \frac{1}{||\tilde{\vm}||_2^2}\vectorize(\tilde{\vm}\vx_{\normalfont \text{CoM}}^\top).
    \end{equation}
\end{proposition}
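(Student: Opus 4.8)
The plan is to recognize $\subspace$ as the kernel of a single linear map and then apply the textbook formula for the orthogonal projection onto that kernel. Writing $\mB = \mI_3 \otimes \tilde{\vm}^\top \in \R^{3 \times 3N}$, the defining constraint becomes $\vx_{\text{CoM}} = \mB\rvx = \mathbf{0}$, so $\subspace = \ker\mB$. Since $\tilde{\vm} \neq \mathbf{0}$, the map $\mB$ has full row rank $3$, hence $\mB\mB^\top$ is invertible and the orthogonal projection onto $\ker\mB = \subspace$ is $\proj = \mI_{3N} - \mB^\top(\mB\mB^\top)^{-1}\mB$. The whole proof then reduces to simplifying this expression into the stated form.

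First I would evaluate the Kronecker products. By the mixed-product property $(\mA\otimes\mB)(\mC\otimes\mD) = (\mA\mC)\otimes(\mB\mD)$, we have $\mB\mB^\top = (\mI_3\otimes\tilde{\vm}^\top)(\mI_3\otimes\tilde{\vm}) = \mI_3\otimes(\tilde{\vm}^\top\tilde{\vm}) = \lVert\tilde{\vm}\rVert_2^2\,\mI_3$, so that $(\mB\mB^\top)^{-1} = \lVert\tilde{\vm}\rVert_2^{-2}\,\mI_3$. Substituting and using $\mB\rvx = \vx_{\text{CoM}}$ gives $\proj\rvx = \rvx - \lVert\tilde{\vm}\rVert_2^{-2}\,\mB^\top(\mB\rvx) = \rvx - \lVert\tilde{\vm}\rVert_2^{-2}\,\mB^\top\vx_{\text{CoM}}$, which already has the right scalar prefactor and leaves only the vector $\mB^\top\vx_{\text{CoM}}$ to identify.

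The remaining step, and the only place conventions actually matter, is to show $\mB^\top\vx_{\text{CoM}} = \vectorize(\tilde{\vm}\,\vx_{\text{CoM}}^\top)$. With $\mB^\top = \mI_3\otimes\tilde{\vm}$ and the mixed-product rule applied to $\vx_{\text{CoM}} = \vx_{\text{CoM}}\otimes 1$, I get $\mB^\top\vx_{\text{CoM}} = (\mI_3\vx_{\text{CoM}})\otimes(\tilde{\vm}\cdot 1) = \vx_{\text{CoM}}\otimes\tilde{\vm}$, and then the column-stacking identity $\vectorize(\vu\vw^\top) = \vw\otimes\vu$, applied with $\vu = \tilde{\vm}$ and $\vw = \vx_{\text{CoM}}$, yields $\vx_{\text{CoM}}\otimes\tilde{\vm} = \vectorize(\tilde{\vm}\,\vx_{\text{CoM}}^\top)$, completing the argument. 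I expect the main obstacle to be purely bookkeeping rather than conceptual: keeping the Kronecker orderings consistent with the paper's convention that $\vectorize$ stacks columns, and tracking which factor of each Kronecker product carries $\mI_3$ versus $\tilde{\vm}$ so that the dimensions $\R^{3N}$ line up correctly at every step.
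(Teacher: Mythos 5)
Your proof is correct, but it takes a genuinely different route from the paper's. The paper treats the projection variationally: it poses $\proj\rvx$ as the solution of the quadratic program of minimizing $\lVert\rvp-\rvx\rVert_2^2$ subject to $\rvp_{\text{CoM}}=\mathbf{0}$, writes down the stationarity system with Lagrange multipliers $\vlambda\in\R^3$, and solves that block linear system by substitution. You instead identify $\subspace=\ker\mB$ for $\mB=\mI_3\otimes\tilde{\vm}^\top$ and invoke the classical complement projector $\proj=\mI_{3N}-\mB^\top(\mB\mB^\top)^{-1}\mB$, after which the proof is pure Kronecker bookkeeping: the mixed-product rule gives $\mB\mB^\top=\lVert\tilde{\vm}\rVert_2^2\,\mI_3$ (which also certifies the full row rank you need for invertibility), and the column-stacking identity $\vectorize(\vu\vw^\top)=\vw\otimes\vu$ correctly turns $\mB^\top\vx_{\text{CoM}}=\vx_{\text{CoM}}\otimes\tilde{\vm}$ into $\vectorize(\tilde{\vm}\vx_{\text{CoM}}^\top)$, consistently with the paper's convention that $\vectorize$ concatenates columns. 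What each approach buys: yours is shorter, avoids introducing and eliminating a multiplier, and mirrors the paper's own Proposition \ref{appendix:AphiAphiT}, which likewise quotes a classical projector formula (there for the column space of $\mA_\varphi$), so the two propositions become symmetric statements about projection onto a column space versus onto a kernel; the paper's Lagrangian derivation is more self-contained, in effect re-deriving the kernel-projector formula for this specific constraint rather than citing it, which may be preferable for readers who do not have that formula at hand. Both arguments yield the identical expression, including the nonuniform-mass factor $\lVert\tilde{\vm}\rVert_2^{-2}$ that distinguishes the weighted from the unweighted CoM case.
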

\begin{proof}
    Evaluating $\proj\rvx$ is equivalent to solving the following problem over $\rvp \in \R^{3N}$:
    \begin{equation}
        \text{minimize: } ||\rvp - \rvx||_2^2, \qquad \text{subject to: } \vp_{\text{CoM}} = \mathbf{0},
    \end{equation}
    This is a quadratic minimization problem with linear equality constraints, whose solution $\rvp_\star$ satisfies
    \begin{equation}\label{appendix:prop2 eqn1}
        \begin{pmatrix}
            \mI_{3N} & \mI_3 \otimes \tilde{\vm} \\
            \mI_3 \otimes \tilde{\vm}^\top & \mathbf{0}
        \end{pmatrix}
        \begin{pmatrix}
            \rvp_\star \\ 
            \vlambda
        \end{pmatrix}
         = 
         \begin{pmatrix}
             \rvx \\
             \mathbf{0}
         \end{pmatrix}
    \end{equation}
    for some Lagrange multipliers $\vlambda \in \R^3$. Expanding \Eqref{appendix:prop2 eqn1}, we have
    \begin{align}
        \rvp_\star &= \rvx - (\mI_3 \otimes \tilde{\vm})\vlambda, \label{appendix:prop2 eqn2a}\\ 
        (\mI_3 \otimes \tilde{\vm}^\top)\rvp_\star &= \textbf{0}. \label{appendix:prop2 eqn2b}
    \end{align}
    Substituting \Eqref{appendix:prop2 eqn2a} into \Eqref{appendix:prop2 eqn2b} gives
    \begin{equation}
        \textbf{0} = (\mI_3 \otimes \tilde{\vm}^\top)(\rvx - (\mI_3 \otimes \tilde{\vm})\vlambda) = \vx_{\text{CoM}} - ||\tilde{\vm}||_2^2 \vlambda,
    \end{equation}
    and solving for $\vlambda$ and substituting it back into \Eqref{appendix:prop2 eqn2a} gives
    \begin{equation}
    \rvp_\star = \rvx - \frac{1}{||\tilde{\vm}||_2^2}(\mI_3 \otimes \tilde{\vm})\vx_{\text{CoM}} = \rvx - \frac{1}{||\tilde{\vm}||_2^2}\vectorize(\tilde{\vm}\vx_{\text{CoM}}^\top),
    \end{equation}
    as desired. 
\end{proof}

Now, consider sampling $\vu \sim \mathcal{N}(\bar{\vu}, \sigma^2)$ from an isotropic Gaussian in $\R^m$. Let $\bar{\rvx} = \varphi(\bar{\vu})$. Then the mapping of $\vu$ onto $\subspace$ follows the distribution
\begin{equation}
    \varphi(\vu) \sim \mathcal{N}(\varphi(\bar{\vu}), \sigma^2\mA_\varphi \mA_\varphi^\top) = \mathcal{N}(\bar{\rvx}, \sigma^2 \proj) 
\end{equation}
by Proposition \ref{appendix:AphiAphiT}. Hence, we can equivalently orthogonally project a sample $\rvx \sim \mathcal{N}(\bar{\rvx}, \sigma^2)$ from an isotropic Gaussian in $\R^{3N}$ since
\begin{equation}
    \proj\rvx \sim \mathcal{N}(\proj\bar{\rvx}, \sigma^2\proj\proj^\top) = \mathcal{N}(\bar{\rvx}, \sigma^2\proj).
\end{equation}
Proposition~\ref{appendix:orthoproj} explains how to actually compute $\proj\vx$. Indeed, in the uniform-mass case (unweighted CoM) $\tilde{\vm} = \frac{1}{N}\mathbf{1}$, where $\mathbf{1} \in \R^N$ is the ones-vector, Proposition \ref{appendix:orthoproj} simplifies to 
\begin{equation}
\proj \rvx = \rvx - \vectorize(\mathbf{1}\vx_{\text{CoM}}^\top).
\end{equation}
That is, $\proj\rvx$ is computed by subtracting the CoM from each point in $\rvx$. However, note that this is not true of the general nonuniform-mass case. 

One final and minor technicality is that if our diffusion process occurs in the underlying space $\R^m$, our denoiser network would be a function $\hat{\vepsilon}_\vtheta(\cdot, \vc, t) \colon \R^m \to \R^m$. The equivalent network on the subspace $\sU$ is a function $\hat{\vepsilon}^\varphi_\vtheta(\cdot, \vc, t) \colon \sU \to \sU$ given by
\begin{equation}\label{eq:imageofdenoiser}
\hat{\vepsilon}^{\varphi}_\vtheta(\rvx, \vc, t) = \varphi(\hat{\vepsilon}_\vtheta(\varphi^{-1}(\rvx), \vc, t)).  
\end{equation}
However, since we want to work solely in $\sU$, instead of directly implementing $\hat{\vepsilon}_\vtheta$, we implement $\hat{\vepsilon}^{\varphi}_\vtheta$ which implicitly determines  $\hat{\vepsilon}_\vtheta$ through \Eqref{eq:imageofdenoiser}.

\subsection{Invariance and Equivariance}\label{appendix:invariance_and_equivariance}

Let us continue the notation and discussion from Appendix \ref{appendix:Zero CoM Subspaces}, except we will now treat point clouds as $N \times 3$ matrices and also assume the conditioning features $\mC$ are given as $N \times d$ matrices.  
 
Let $\sT \subseteq \mathrm{O}(3) = \{\mR \in \R^{3 \times 3} \mid \mR^{-1} = \mR^\top\}$ be a set of rigid 3D linear transformations of interest. For example, previous works \citep{hoogeboom2022equivariant, xu2022geodiff} take $\sT = \mathrm{O}(3)$ to be the space of all such motions, while we consider only the subset $\sT = \{\diag(\vb) \mid \vb \in \{-1, +1\}^3 \}$ of axially-aligned reflections.  
Let $\sP$ be the set of permutation matrices $\mPi \in \{0, 1\}^{N \times N}$ mapping $\sU$ onto itself, so that $\mPi\mX \in \sU$ for all $\mX \in \sU$. Then we will call distributions
\begin{equation}
    p(\cdot\,|\,\cdot) \colon \R^m \times \R^{N \times d} \to \R \quad\text{and}\quad p(\cdot\,|\,\cdot, \cdot)\colon \R^m \times \R^m \times \R^{N \times d} \to \R
\end{equation}
$(\sT, \sP)$-invariant and $(\sT, \sP)$-equivariant, respectively, if for each $\mR \in \sT$ and $\mPi \in \sP$, the following hold for all $\vu, \vu' \in \R^m$ and $\mC \in \R^{N \times d}$:
\begin{equation}
    p(\lambda(\vu) | \mPi \mC) = p(\vu | \mC) \quad\text{and}\quad  p(\lambda(\vu) | \lambda(\vu'), \mPi \mC) = p(\vu | \vu', \mC),
\end{equation}
where $\lambda(\vu) = \varphi^{-1}(\mPi\varphi(\vu)\mT^\top)$. The preceding statement is saying that the likelihood of a point clouds in $\sU$ should be unchanged regardless of how it is transformed or reordered under $(\sT, \sP)$, but is formalized as a condition on the underlying space $\R^m$.  
In the following propositions, we prove that our diffusion model learns a $(\sT, \sP)$-invariant generative distribution if its denoiser network satisfies a corresponding equivariance condition.

\begin{proposition}\label{appendix:lambda isometry}
    Any such $\lambda$ defined above is an isometric isomorphism, so its matrix $\mA_\lambda \in \R^{m\times m}$ is orthogonal with $|\det\mA_\lambda| = 1$.
\end{proposition}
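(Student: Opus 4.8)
The plan is to exhibit $\lambda$ as a composition of three maps, each an isometric isomorphism, and then read off the consequences for its matrix. Writing $\Psi\colon \sU \to \sU$ for the map $\Psi(\mX) = \mPi\mX\mR^\top$, we have $\lambda = \varphi^{-1} \circ \Psi \circ \varphi$, where $\mR \in \sT$ and $\mPi \in \sP$. Since $\varphi$ (hence $\varphi^{-1}$) is an isometric isomorphism between $\R^m$ and $\sU$ by the construction in Appendix~\ref{appendix:Zero CoM Subspaces}, and a composition of isometric isomorphisms is again one, it suffices to show that $\Psi$ is a well-defined isometric isomorphism of $\sU$ onto itself.

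First I would check closure, i.e.\ that $\Psi$ maps $\sU$ into $\sU$. Left multiplication by $\mPi$ preserves $\sU$ by the very definition of $\sP$. For the right factor, recall that membership in $\sU$ is the weighted zero-CoM condition, which can be written as $\vm^\top\mX = \bm{0}^\top$ for the mass vector $\vm$; since $\vm^\top(\mX\mR^\top) = (\vm^\top\mX)\mR^\top = \bm{0}^\top$, right multiplication by $\mR^\top$ also preserves $\sU$. The map $\Psi$ is manifestly linear, and it is invertible on $\sU$ with inverse $\mY \mapsto \mPi^\top\mY\mR$ (again a self-map of $\sU$ by the same two observations, using $\mPi\mPi^\top = \mI_N$ and $\mR\mR^\top = \mI_3$), so $\Psi$ is a linear automorphism of $\sU$.

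Next I would verify that $\Psi$ is an isometry of $\sU$ in the Frobenius inner product inherited from $\R^{N \times 3}$. Using $\mPi^\top\mPi = \mI_N$ and $\mR^\top\mR = \mI_3$ together with the cyclic invariance of the trace,
\begin{equation}
\lVert \mPi\mX\mR^\top \rVert_F^2 = \Tr\!\big(\mR\mX^\top\mPi^\top\mPi\mX\mR^\top\big) = \Tr\!\big(\mX^\top\mX\,\mR^\top\mR\big) = \Tr(\mX^\top\mX) = \lVert\mX\rVert_F^2 .
\end{equation}
Hence $\Psi$ preserves norms, and by polarization inner products, so $\lambda = \varphi^{-1}\circ\Psi\circ\varphi$ is an isometric isomorphism of $\R^m$.

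Finally, a linear isometry of $\R^m$ has an orthogonal matrix, so $\mA_\lambda^\top\mA_\lambda = \mI_m$, whence $\det(\mA_\lambda)^2 = \det(\mI_m) = 1$ and therefore $|\det\mA_\lambda| = 1$. The argument is essentially bookkeeping, and the only point genuinely requiring care is the closure claim $\Psi(\sU)\subseteq\sU$ — in particular that the reflection, acting on the right on the coordinate columns, preserves the \emph{weighted} center of mass. This is exactly where writing the constraint as $\vm^\top\mX = \bm{0}^\top$ makes the verification immediate, and where it matters that $\mR$ acts on coordinates while $\mPi$ acts on the atom index.
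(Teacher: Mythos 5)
Your proof is correct and follows essentially the same route as the paper's: both establish linearity from the linearity of $\varphi$, $\varphi^{-1}$, and the conjugated map, and both reduce the isometry claim to the fact that left-multiplication by $\mPi$ and right-multiplication by $\mR^\top$ preserve the Frobenius norm (yours via the trace computation, the paper's by a verbal argument that permuting entries and applying a rigid motion row-wise leave the norm unchanged). The one place you go beyond the paper is the explicit verification that $\Psi(\sU)\subseteq\sU$, i.e.\ that $\mPi\varphi(\vu)\mR^\top$ lies in the domain of $\varphi^{-1}$ so that $\lambda$ is well defined — the paper leaves this implicit, and your observation that the weighted zero-CoM constraint $\tilde{\vm}^\top\mX = \bm{0}^\top$ is preserved under right-multiplication by $\mR^\top$ is a worthwhile addition rather than a redundancy.
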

\begin{proof}
    Linearity follows from the linearity of $\varphi$ and $\varphi^{-1}$. Since both are also isometries,
    \begin{equation}
        ||\lambda(\vu)||_2^2 = \lVert\mPi\varphi(\vu)\mR^\top\rVert_F^2 = \lVert\varphi(\vu)\mR^\top\rVert_F^2 =||\varphi(\vu)||_F^2 = ||\vu||_2^2,
    \end{equation}
    where $||\cdot||_F$ is the Frobenius norm. The second equality follows since permuting the entries of a matrix does not change its norm. The third follows since applying a rigid motion to each row of a matrix does not change each row's norm and hence does not change the overall matrix norm.   
\end{proof}

\begin{proposition}\label{appendix:gaussian invariance}
    The standard Gaussian distribution $p(\vz_T | \mC) = \mathcal{N}(\vz_T; \mathbf{0}, 1)$ is $(\sT, \sP)$-invariant.
\end{proposition}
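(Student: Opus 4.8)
The plan is to reduce the claim to the rotational symmetry of the isotropic Gaussian density and then invoke Proposition~\ref{appendix:lambda isometry}. First I would observe that the standard Gaussian $p(\vz_T\mid\mC)=\mathcal{N}(\vz_T;\mathbf{0},1)$ carries no dependence on the conditioning matrix $\mC$ whatsoever, so that $p(\cdot\mid\mPi\mC)=p(\cdot\mid\mC)$ holds trivially for every permutation $\mPi\in\sP$. Consequently the $(\sT,\sP)$-invariance condition $p(\lambda(\vu)\mid\mPi\mC)=p(\vu\mid\mC)$ collapses to the purely geometric statement $p(\lambda(\vu))=p(\vu)$ for the density $p(\vu)=(2\pi)^{-m/2}\exp(-\tfrac12\lVert\vu\rVert_2^2)$ on $\R^m$, where $\lambda(\vu)=\varphi^{-1}(\mPi\varphi(\vu)\mR^\top)$.

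Next I would exploit that this density depends on its argument only through the norm $\lVert\vu\rVert_2$. By Proposition~\ref{appendix:lambda isometry}, any such $\lambda$ is an isometric isomorphism, so $\lVert\lambda(\vu)\rVert_2=\lVert\vu\rVert_2$ for every $\vu\in\R^m$. Substituting this equality into the exponent immediately yields $p(\lambda(\vu))=p(\vu)$, which is exactly the required identity. Thus the bulk of the work is already contained in the preceding proposition, and the remainder is a one-line substitution.

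The only point demanding care is conceptual rather than computational: one must be clear about whether ``invariance of the distribution'' refers to the density evaluated pointwise (as the definition in this section literally states) or to the pushforward measure $\lambda_\ast p$. These two notions coincide precisely because $\lambda$ is orthogonal with $|\det\mA_\lambda|=1$, so its Jacobian factor is unity and no correction term appears under the change of variables. I would remark on this briefly so that the later propositions---where the same $\lambda$ acts on genuinely $\mC$-dependent transition kernels---rest on a consistent definition. Beyond this bookkeeping there is no real obstacle; the statement is essentially an instance of the fact that an isotropic Gaussian is invariant under every orthogonal transformation of its domain.
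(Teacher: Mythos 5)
Your proof is correct and follows essentially the same route as the paper's: the paper likewise disposes of the statement by noting that the zero-centered Gaussian depends on its argument only through the norm and invoking Proposition~\ref{appendix:lambda isometry}. Your added remark on the density-versus-pushforward distinction (resolved by $|\det\mA_\lambda|=1$) is a reasonable clarification but not a different argument.
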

\begin{proof}   
    This follows from Proposition \ref{appendix:lambda isometry} and the fact that a Gaussian distribution centered at $\mathbf{0}$ depends on $\vz_T$ only through its norm. 
\end{proof}

\begin{proposition}
    If a diffusion model has $(\sT, \sP)$-equivariant transitions $p_{\vtheta}(\vz_t|\vz_{t+1}, \mC)$ and $p_{\vtheta}(\vx|\vz_0)$, then its marginal generative distribution $p_{\vtheta}(\vx|\mC)$ will be $(\sT, \sP)$-invariant. 
\end{proposition}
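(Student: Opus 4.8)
The plan is to write the marginal generative distribution as an explicit integral over the latent variables and then to exploit a measure-preserving change of variables driven by $\lambda$. Recall from the excerpt that
\begin{equation}
p_\vtheta(\vx|\mC) = \int p_\vtheta(\vx|\vz_0, \mC)\left( \prod_{t=1}^T p_\vtheta(\vz_{t-1}|\vz_t, \mC) \right)p(\vz_T|\mC)\, d\vz_0 \cdots d\vz_T,
\end{equation}
where each $\vz_t$ ranges over $\R^m$ (the underlying space of $\sU$). The goal is to show that for every $\mR \in \sT$ and $\mPi \in \sP$, with $\lambda(\vu) = \varphi^{-1}(\mPi\varphi(\vu)\mR^\top)$, one has $p_\vtheta(\lambda(\vx)|\mPi\mC) = p_\vtheta(\vx|\mC)$.

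First I would write out $p_\vtheta(\lambda(\vx)|\mPi\mC)$ using the same integral formula, with dummy latent variables $\vw_0, \ldots, \vw_T$ and conditioning $\mPi\mC$ throughout. Then I would perform the substitution $\vw_t = \lambda(\vz_t)$ simultaneously for every $t$. By Proposition \ref{appendix:lambda isometry}, $\lambda$ is an isometric isomorphism whose matrix $\mA_\lambda$ is orthogonal with $|\det\mA_\lambda| = 1$, so each factor $d\vw_t = |\det\mA_\lambda|\, d\vz_t = d\vz_t$ and the total Jacobian of the joint substitution is $1$; because $\lambda$ is a bijection the domain of integration is also preserved. After the substitution I would rewrite the integrand factor by factor: the boundary factor becomes $p_\vtheta(\lambda(\vx)|\lambda(\vz_0), \mPi\mC) = p_\vtheta(\vx|\vz_0, \mC)$ by the assumed equivariance of $p_\vtheta(\vx|\vz_0)$; each transition becomes $p_\vtheta(\lambda(\vz_{t-1})|\lambda(\vz_t), \mPi\mC) = p_\vtheta(\vz_{t-1}|\vz_t, \mC)$ by the assumed $(\sT,\sP)$-equivariance of the transitions; and the prior becomes $p(\lambda(\vz_T)|\mPi\mC) = p(\vz_T|\mC)$ by Proposition \ref{appendix:gaussian invariance}. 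The transformed integral is then term-for-term identical to the integral for $p_\vtheta(\vx|\mC)$, which closes the argument.

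The main obstacle I expect is not any single estimate but the bookkeeping of the simultaneous change of variables: I must check that the substitution applies consistently to all $T+1$ latents, that the orthogonality of $\mA_\lambda$ genuinely makes every Jacobian factor unity (so no stray $|\det\mA_\lambda|$ survives), and that the conditioning argument $\mC$ is permuted to $\mPi\mC$ in exactly the pattern matched by the equivariance hypotheses. Care is also needed to confirm that $\lambda$ maps $\R^m$ (equivalently $\sU$ after pushing forward by $\varphi$) onto itself so that the integration region is unchanged; this is where Proposition \ref{appendix:lambda isometry} is used a second time, now for bijectivity rather than for the Jacobian.
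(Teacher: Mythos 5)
Your proof is correct, and it rests on exactly the same three ingredients as the paper's: the unit Jacobian and bijectivity of $\lambda$ (Proposition \ref{appendix:lambda isometry}), the assumed $(\sT,\sP)$-equivariance of the transition factors, and the invariance of the Gaussian prior (Proposition \ref{appendix:gaussian invariance}). The difference is in the decomposition. You expand $p_\vtheta(\vx|\mC)$ as one integral over the entire trajectory $(\vz_0,\ldots,\vz_T)$ and perform a single simultaneous change of variables $\vw_t = \lambda(\vz_t)$, after which every factor transforms at once and the total Jacobian $|\det\mA_\lambda|^{T+1} = 1$ disappears. The paper instead argues inductively, one marginalization at a time: it first shows $p_\vtheta(\lambda(\vz_{T-1})|\mPi\mC) = p_\vtheta(\vz_{T-1}|\mC)$ by substituting $\vu = \lambda^{-1}(\vz_T)$ in the one-step integral against the invariant prior, and then repeats this argument down the chain through $\vz_{T-2},\ldots,\vz_0$ to $\vx$. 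Your global substitution is more compact and avoids the induction entirely; the paper's step-by-step version is longer but establishes a strictly stronger intermediate fact along the way, namely that \emph{every} marginal $p_\vtheta(\vz_t|\mC)$ in the chain is itself $(\sT,\sP)$-invariant, which is conceptually useful (the generative process stays symmetric at all noise levels, not just at its endpoint). The bookkeeping concerns you raise are exactly the right ones, and both are discharged by Proposition \ref{appendix:lambda isometry}: orthogonality of $\mA_\lambda$ kills every Jacobian factor, and linear bijectivity of $\lambda$ on $\R^m$ preserves the integration domain; the only remaining care is that $\mPi\mC$ appears in the conditioning slot of each factor in precisely the pattern required by the equivariance hypotheses, which your factor-by-factor rewriting respects.
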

\begin{proof}
    Let any $\lambda$ be fixed as above. Then 
    \begin{equation}
        p_{\vtheta}(\lambda(\vz_{T-1})|\mPi\mC) = \int_{\R^{m}} p_\vtheta(\lambda(\vz_{T-1})|\vz_T, \mPi\mC)p(\vz_T| \mPi\mC) \mathrm{d}\vz_T.
    \end{equation}
    Recall that the transitions are $(\sT, \sP)$-equivariant and the marginal distribution of $\vz_T$ (which is chosen to be the standard Gaussian) is $(\sT, \sP)$-invariant by  Proposition \ref{appendix:gaussian invariance}. Hence,
    \begin{equation}
        p_{\vtheta}(\lambda(\vz_{T-1})|\mPi\mC) = \int_{\R^m} p_\vtheta(\vz_{T-1}|\lambda^{-1}(\vz_T), \mC)p(\lambda^{-1}(\vz_T) | \mC) \mathrm{d}\vz_T.
    \end{equation}
    Finally, a change of variables $\vu = \lambda^{-1}(\vz_T)$ gives
    \begin{align}
        p_{\vtheta}(\lambda(\vz_{T-1})|\mPi\mC) &= \int_{\R^m} p_\vtheta(\vz_{T-1}|\vu, \mC)p(\vu | \mC)|\det\mA_\lambda| \mathrm{d}\vu \\
         &= \int_{\R^m} p_\vtheta(\vz_{T-1}|\vu, \mC)p(\vu | \mC) \mathrm{d}\vu \qquad \text{(Proposition \ref{appendix:lambda isometry})} \\
          &= p_\vtheta(\vz_{T-1}| \mC),
    \end{align}
    so the marginal distribution of $\vz_{T - 1}$ is $(\sT, \sP)$-invariant. We can repeat this argument inductively to find that the marginal distributions of $\vz_{T - 2}, \ldots, \vz_0$, and $\vx$, are $(\sT, \sP)$-invariant.
\end{proof}

\begin{proposition}\label{appendix:equivariancecondition}
Suppose the denoiser network satisfies, for each $\mR \in \sT$ and $\mPi \in \sP$,
\begin{equation}\label{appendix:denoiserequivariance}
\hat{\vepsilon}_\vtheta^\varphi(\mPi\mX\mR^\top, \mPi\mC, t) = \mPi\hat{\vepsilon}_\vtheta^\varphi(\mX, \mC, t)\mR^\top
\end{equation}
for all $\mX \in \sU$ and $\mC \in \R^{N \times d}$. Then the transitions of the diffusion model are $(\sT, \sP)$-equivariant.
\end{proposition}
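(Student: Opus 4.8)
The plan is to exploit the fact that each transition of the model is an isotropic Gaussian in $\vz_{t-1}$ whose mean is a linear function of $\vz_t$ and of the denoiser output, and whose covariance is a scalar multiple of the identity. Because $\lambda$ is a linear isometry (Proposition~\ref{appendix:lambda isometry}), invariance of such a density reduces entirely to equivariance of its mean: the density depends on $\vz_{t-1}$ only through $\lVert\vz_{t-1} - \vmu\rVert_2$, which $\lambda$ preserves. Thus the real content is to push the assumed equivariance of $\hat{\vepsilon}_\vtheta^\varphi$ through the $\hat{\vx}_\vtheta$-parameterization and into the transition mean.

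First I would recast the matrix-form hypothesis (\Eqref{appendix:denoiserequivariance}) as an equivariance statement on the underlying space $\R^m$. Using the relation $\hat{\vepsilon}^\varphi_\vtheta(\rvx, \vc, t) = \varphi(\hat{\vepsilon}_\vtheta(\varphi^{-1}(\rvx), \vc, t))$ from \Eqref{eq:imageofdenoiser}, the definition $\lambda(\vu) = \varphi^{-1}(\mPi\varphi(\vu)\mR^\top)$, and the identity $\varphi(\lambda(\vu)) = \mPi\varphi(\vu)\mR^\top$, I would substitute $\mX = \varphi(\vu)$ into \Eqref{appendix:denoiserequivariance} and cancel $\varphi \circ \varphi^{-1}$ to obtain the vector-form equivariance
\[
\hat{\vepsilon}_\vtheta(\lambda(\vu), \mPi\mC, t) = \lambda\big(\hat{\vepsilon}_\vtheta(\vu, \mC, t)\big).
\]
This step is pure bookkeeping with the isomorphism $\varphi$, but it is where the hypothesis is actually consumed.

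Next I would transfer this to the denoised estimate and then to the mean. Since $\hat{\vx}_\vtheta(\vz_t, \vc, t) = \alpha_t^{-1}\vz_t - \sigma_t\alpha_t^{-1}\hat{\vepsilon}_\vtheta(\vz_t, \vc, t)$ is a linear combination of $\vz_t$ and $\hat{\vepsilon}_\vtheta$, and $\lambda$ is linear, the above gives $\hat{\vx}_\vtheta(\lambda(\vz_t), \mPi\mC, t) = \lambda(\hat{\vx}_\vtheta(\vz_t, \mC, t))$. The transition mean from \Eqref{transition posterior} is a further linear combination of $\vz_t$ and $\hat{\vx}_\vtheta(\vz_t, \mC, t)$, hence is likewise equivariant under $\lambda$. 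Writing the transition as $\mathcal{N}(\vz_{t-1}; \vmu_\vtheta(\vz_t, \mC, t), \sigma^2)$ with the isotropic variance $\sigma^2 = \tilde{\sigma}_t^2\sigma_{t-1}^2/\sigma_t^2$, the equivariance of the mean together with the isometry $\lVert\lambda(\vz_{t-1}) - \lambda(\vmu_\vtheta)\rVert_2 = \lVert\vz_{t-1} - \vmu_\vtheta\rVert_2$ of Proposition~\ref{appendix:lambda isometry} yields $p_\vtheta(\lambda(\vz_{t-1})|\lambda(\vz_t), \mPi\mC) = p_\vtheta(\vz_{t-1}|\vz_t, \mC)$. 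The identical argument, now with mean $\hat{\vx}_\vtheta(\vz_0, \mC, 0)$ and isotropic variance $\alpha_0^2\sigma_0^{-2}$, handles the final transition $p_\vtheta(\vx|\vz_0, \mC)$ of \Eqref{ptheta_x_given_z0}.

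I expect the main obstacle to be the first step, namely carefully translating between the matrix-form equivariance of $\hat{\vepsilon}^\varphi_\vtheta$ stated over $\sU$ and the vector-form equivariance of $\hat{\vepsilon}_\vtheta$ over $\R^m$ that the argument actually uses, since this requires tracking the $\varphi$-isomorphism through the composition $\lambda = \varphi^{-1} \circ (\mPi\,\cdot\,\mR^\top) \circ \varphi$ and checking that the pushforward respects the $(\sT, \sP)$-action. Once that identification is secured, the remaining steps are routine, relying only on the linearity of $\lambda$, the explicit $\hat{\vx}_\vtheta$-parameterization, and the isotropy of the Gaussian covariances.
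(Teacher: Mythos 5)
Your proof is correct and follows essentially the same route as the paper's: translate the matrix-form hypothesis into the vector-form equivariance $\hat{\vepsilon}_\vtheta(\lambda(\vu), \mPi\mC, t) = \lambda(\hat{\vepsilon}_\vtheta(\vu, \mC, t))$ on $\R^m$, push it through the linear $\hat{\vx}_\vtheta$-parameterization into the Gaussian means, and conclude via linearity and the isometry of $\lambda$ (Proposition~\ref{appendix:lambda isometry}) that each isotropic transition density is unchanged. The only difference is one of emphasis --- the paper writes out the $\vz_0 \to \vx$ transition and defers the intermediate ones to ``similar manipulations,'' while you spell out the intermediate transition means from \Eqref{transition posterior} explicitly --- which is, if anything, slightly more complete.
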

\begin{proof}
By \Eqref{eq:imageofdenoiser}, the denoiser on the underlying space is $\hat{\vepsilon}_\vtheta(\vu, \mC, t) = \varphi^{-1}(\hat{\vepsilon}_\vtheta^{\varphi}(\varphi(\vu), \mC, t))$. Direct computation shows that \Eqref{appendix:denoiserequivariance} is equivalent to requiring: 
\begin{equation}\label{appendix:underlyingdenoiserequivariance}
\hat{\vepsilon}_\vtheta(\lambda(\vu), \mPi\mC, t) = \lambda(\hat{\vepsilon}_\vtheta(\vu, \mC, t)).
\end{equation}
Recall Proposition \ref{appendix:lambda isometry}. It follows that: 
\allowdisplaybreaks
\begin{align}
    p_\vtheta(\lambda(\vx)|\lambda(\vz_0), \vc) &= \mathcal{N}\!\left(\lambda(\vx); \hat{\vx}_\vtheta(\lambda(\vz_0), \vc, 0), \alpha_0^2\sigma_0^{-2}\right) \\
    &= \mathcal{N}\!\left(\lambda(\vx); \lambda(\hat{\vx}_\vtheta(\vz_0, \vc, 0)), \alpha_0^2\sigma_0^{-2}\right) \\
    &= \mathcal{N}\!\left(\vx; \hat{\vx}_\vtheta(\vz_0, \vc, 0), \alpha_0^2\sigma_0^{-2}\right) \\
    &= p_\vtheta(\vx|\vz_0, \vc),
\end{align}
so the transition from $\vz_0$ to $\vx$ is $(\sT, \sP)$-equivariant. The second equality follows from \Eqref{appendix:underlyingdenoiserequivariance} and linearity of $\lambda$. The third equality follows as a normal distribution $\mathcal{N}(\vx;\bm{\mu}, \sigma^2)$ is $(\sT, \sP)$-equivariant since it depends only on the distance $\lVert \vx - \bm{\mu}\rVert_2$ and $\lambda$ is an isometry. The proof of the 
equivariance of $p_{\vtheta}(\vz_t|\vz_{t+1}, \mC)$ follows through similar manipulations.  
\end{proof}
\section{Baseline Genetic Algorithm}
\label{appendix:baseline}

Inspired by \cite{mayer2019feasibility}, our genetic algorithm (GA) searches over heavy atom frameworks that are consistent with a given set of unsigned substitution coordinates. Hydrogens are added later. For an example with $k$ specified heavy atom unsigned substitution coordinates and $u$ unavailable heavy atom substitution coordinates, each individual in the genetic population comprises a vector of binary signs $\vb \in \{+, -\}^k$ and continuous positions $\vu \in \R^u$ which, together, fully characterize a candidate heavy atom framework.  

\begin{figure}[h]
    \centering
    \includegraphics[width=\textwidth]{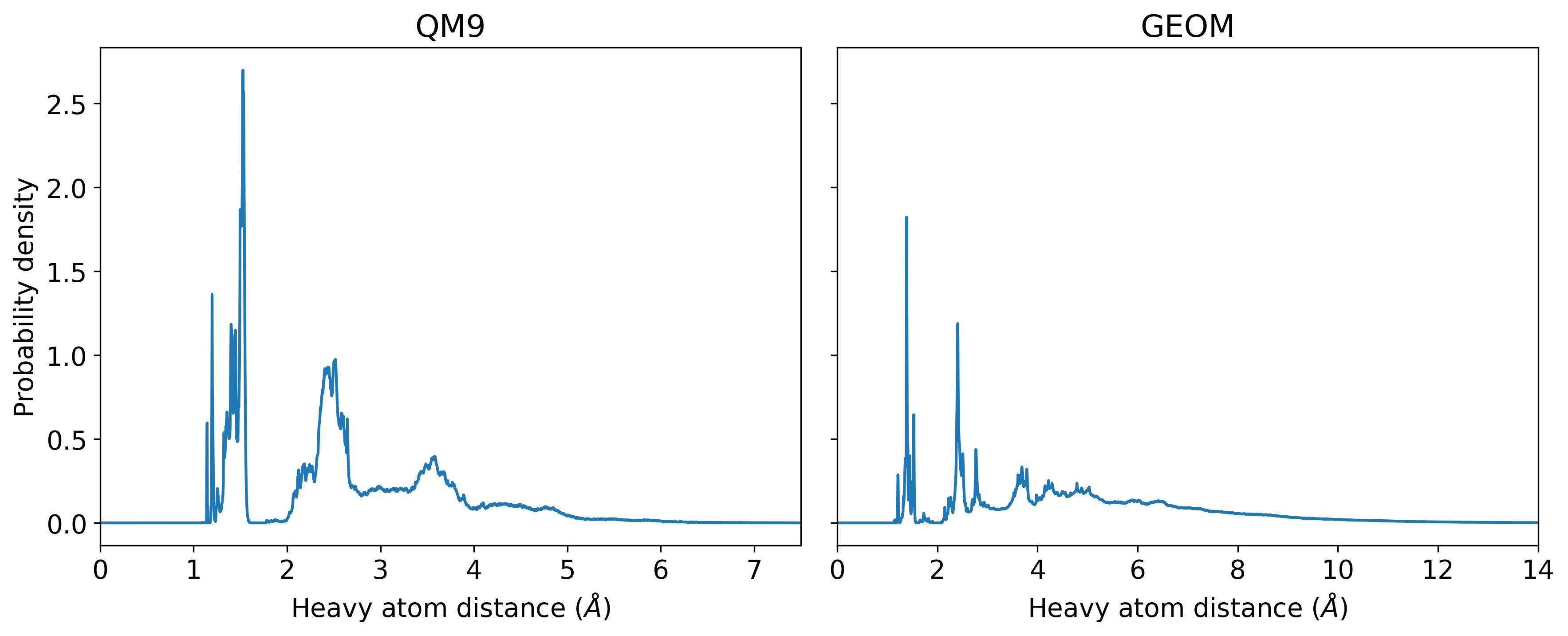}
    \vspace{-\baselineskip}
    \caption{Pairwise heavy atom distance histograms used for the GA fitness function.}
    \label{fig:distogram}
\end{figure}

Our GA comprises a fitness, mutation, crossover, and selection function. The fitness function assigns to each candidate framework a scalar ``badness" score by taking a sum of three terms:
\begin{enumerate}
    \item \textbf{Moment error.} The Euclidean distance between the upper-triangular parts of the planar dyadic of the candidate (\Eqref{eq:planar_dyadic}) and the ground truth $\diag(P_X, P_Y, P_Z)$.  
    \item \textbf{CoM error.} The Euclidean distance of the candidate's CoM from the origin.
    \item \textbf{Pairwise distance NLL.} A probability density function $\tilde{p}(d)$ of pairwise Euclidean distances between heavy atoms is estimated as a histogram from training set statistics (Figure \ref{fig:distogram}). We consider the negative log-likelihood $\text{NLL}(\sD) = -\sum_{d\in\sD} \log{\tilde{p}(d)}$, where $\sD$ is the multiset of pairwise distances of atoms in a candidate framework, assuming they are independent and identically distributed (i.i.d.). Candidates with a lower NLL more closely match the training set and are more likely to make up actual molecules.
\end{enumerate}
The mutation operator takes an individual and randomly bit-flips one or more elements in $\vb$ and adds Gaussian noise to $\vu$. The crossover operator takes in two individuals and randomly swaps selected contiguous swaths of signs between each $\vb$ and does not modify $\vu$. Lastly, the selection operator uses a tournament selection. Given these components, our GA minimizes the fitness function using the \code{eaSimple} algorithm from the Python library DEAP \citep{DEAP_JMLR2012} with 20 generations, a population size of 20,000, a mutation rate of 0.7, and a crossover rate of 0.9. At the end of the GA, the top $K$ scoring heavy atom frameworks with unique signs $\vb$ are kept. 

Next, each framework is decorated with hydrogens using Hydride \citep{kunzmann2022adding}, which predicts hydrogen positions by comparing heavy atom fragments to a library of hydrogen-containing fragments. Since Hydride may not necessarily add the correct number of hydrogens consistent with the molecular formula, hydrogens are randomly dropped if there are too many, and hydrogens are added with positions as Gaussian noise if there are too few. To mitigate the effects of this randomness, the adding or dropping procedure is repeated 1000 times. These hydrogen-decorated structures are scored using another function that sums two terms: \textbf{(1)} the distance of the CoM to the origin, and \textbf{(2)} $1000\min(0, 1.09-d_{\textrm{min}})^2$, 
where $d_{\textrm{min}}$ is the minimum pairwise distance between any two atoms in the structure. This second term provides a large penalty if atoms are too close together. Only the highest scoring hydrogen-decorated structure for each heavy atom framework is kept. In total, $K$ all-atom 3D structures are returned. On QM9 and GEOM, the GA and decoration take approximately 2 minutes per example.

\section{Model and Training Details}
\label{appendix:model_details}

\subsection{Pseudocode}

\begin{algorithm}[h]
\caption{Computing the training objective on a single example.}
\label{alg:train_step}
\begin{algorithmic}[1]
    \Require{The conditioning features $\mC \in \R^{N \times d}$ and 3D conformation $\mX \in \R^{N \times 3}$ of a molecule; a neural network $\hat{\vepsilon}_\theta$.}
    \State Orient $\mX$ to its principal axis system using Algorithm \ref{alg:get_unsigned_coords}
    \State $t \sim \mathcal{U}(t; 0, \ldots, T)$ 
    \State $\vepsilon \sim \mathcal{N}(\vepsilon; \mathbf{0}, 1)$ with dimensions $\vepsilon \in \R^{N \times 3}$
    \State Orthogonally project $\vepsilon$ onto the zero CoM subspace $\sU$
    \State $\mZ_t \gets \alpha_t\mX + \sigma_t\vepsilon$ 
    \State \Return $\lVert \vepsilon -  \hat{\vepsilon}_\theta(\mZ_t, \mC, t) \rVert_2^2$
\end{algorithmic}
\end{algorithm}

\begin{algorithm}[h]
\caption{Conditionally sampling a single example.}
\label{alg:sampling}
\begin{algorithmic}[1]
    \Require{The conditioning features $\mC \in \R^{N \times d}$ of an unknown molecule; a neural network $\hat{\vepsilon}_\theta$.}
    \State $\mZ_T \sim \mathcal{N}(\vepsilon; \mathbf{0}, 1)$ with dimensions $\mZ_T \in \R^{N \times 3}$
    \State Orthogonally project $\mZ_T$ onto the zero CoM subspace $\sU$
    \For{$t = T$ to $1$}
        \State  $\mZ_{t-1} \sim p_\vtheta(\mZ_{t - 1}|\mZ_t, \mC)$
        \State Orthogonally project $\mZ_t$ onto $\sU$
    \EndFor
    \State \Return $\mX \sim p_\vtheta(\mX|\mZ_0, \mC)$
\end{algorithmic}
\end{algorithm}

\begin{algorithm}[h]
\caption{Preprocessing a single example.}
\label{alg:get_unsigned_coords}
\begin{algorithmic}[1]
    \Require{The atomic masses $\vm \in (0, \infty)^{N}$ and 3D conformation $\mX \in \R^{N \times 3}$ of a molecule.}
    \State $M \gets \sum_{i = 1}^N m_i$
    \State $\vx_{\text{CoM}} \gets M^{-1} \sum_{i = 1}^N m_i \vx_i$, where $\vx_i$ is the $i$-th row of $\mX$
    \State $\mX \gets (\vx_1 - \vx_{\text{CoM}}, \ldots, \vx_N - \vx_{\text{CoM}})^\top$
    \State $\planardyadic \gets \sum_{i = 1}^N m_i \vx_{i} \vx_{i}^\top$
    \State Eigendecompose $\planardyadic = \mV\diag(P_X, P_Y, P_Z)\mV^\top$, where $P_X > P_Y > P_Z$
    \State $\mX \gets \mX\mV$ 
    \State $\mS \gets N \times 3$ binary dropout mask based on atom types and dropout rate
    \State $|\mX| \gets \mS \odot \textrm{abs}(\mX)$
    \State \Return (
    \State  $\quad$ aligned coordinates $\mX$, 
    \State  $\quad$ partial unsigned substitution coordinates $|\mX|$ with mask $\mS$, 
    \State  $\quad$ planar moments of inertia $(P_X, P_Y, P_Z)$
    \State )
\end{algorithmic}
\end{algorithm}

\begin{algorithm}[h]
\caption{Network architecture of $\hat{\vepsilon}_\vtheta^\varphi(\mX, \mC, t)$. We overload the notation for layers (e.g., every use of \texttt{Lin} denotes a new linear layer). See Appendix \ref{appendix:architecture} for further details. } 
\label{alg:architecture}
\begin{algorithmic}[1]
    \Require{The conditioning features $\mC \in \R^{N \times d}$ and noised 3D conformation $\mX \in \sU \subseteq \R^{N \times 3}$ of a molecule (centered to zero CoM); a timestep $t \in \mathbb{N}$; network parameters $\vtheta$.}
    \State $\sigma \gets \texttt{SiLU}$
    \State $\vt \gets \texttt{SinPosEmb}(t)$  \Comment{sinusoidal timestep embedding (128-dim)}
    \State $\mA \gets \texttt{Embed}(\va)$ \Comment{learned atomic number embedding (32-dim)}
    \State $\mC \gets (\,\mC \mid \mA \mid \tilde{\vm} \mid \vt\, )$ 
    \State $\mH \gets \texttt{Lin}(\mC)$ \Comment{linearly project to hidden states (256-dim)}
    \State $\mH_{\text{res}} \gets \mH$
    \State $\mC \gets \sigma(\texttt{Lin}(\sigma(\texttt{Lin}(\mC))))$ \Comment{MLP (dims $d_{\text{in}} \to 128 \to 128$)}
    \State $\mX^{(0)} \gets \mX$
    \For{$i$ from $1$ to $n_{\text{blocks}} - 1$}  \Comment{ResiDual-like blocks ($n_{\text{blocks}} = 6$)}
        \State $\mX', \mH' \gets \texttt{EQBlock}(\mX, \mH, \mX^{(0)})$
        \State $\mX \gets \mX'$ with its CoM subtracted 
        \State $\mH_{\text{res}} \gets \mH_{\text{res}} + \mH'$
        \State $\mH \gets \texttt{AdaLN}(\mH + \mH', \mC)$
        \State $\mH' \gets \texttt{Lin}(\sigma(\texttt{Lin}(\mH)))$ \Comment{MLP (dims $ 256 \to 320 \to 256$)}
        \State $\mH_{\text{res}} \gets \mH_{\text{res}} + \mH'$
        \State $\mH \gets \texttt{AdaLN}(\mH + \mH', \mC)$
        \If{$i = n_{\text{blocks}}  - 1$} 
            \State $\mH \gets \mH + \texttt{AdaLN}(\mH_{\text{res}}, \mC)$
        \EndIf
    \EndFor
    \State $\mX', \cdot \gets \texttt{EQBlock}(\mX, \mH, \mX^{(0)})$ \Comment{final coordinate update}
    \State $\mX \gets \mX'$ with its CoM subtracted 
    \State \Return $\mX - \mX^{(0)}$
\end{algorithmic}
\end{algorithm}

\newpage

\subsection{Network Architecture}
\label{appendix:architecture}

To satisfy Proposition \ref{appendix:equivariancecondition}, we implement the denoiser $\hat{\vepsilon}^\varphi_\vtheta(\mX, \mC, t)$ with an architecture inspired by Transformers \citep{vaswani2017attention} and E($n$)-equivariant graph neural networks (EGNNs) \citep{satorras2021en}. Algorithm~\ref{alg:architecture} summarizes the network architecture. The core body of the network is a ResiDual Transformer backbone \citep{xie2023residual}, except the self-attention layers are replaced with an equivariant block (\texttt{EQBlock}) that jointly updates the hidden features and coordinates, which we will describe shortly. We also use a conditional version of LayerNorm (\texttt{AdaLN}) \citep{dieleman2022continuous, dhariwal2021diffusion}, where the affine scale and shift are given from a linear projection of some input node features $\mC$, instead of being learned constants as in regular LayerNorm.

\texttt{EQBlock} is adapted from the EGNN block \citep{satorras2021en}, with the major difference being that we add reflection- but not E(3)-invariant edge features. Specifically, for $\vx, \vx' \in \R^3$, we consider the ``distance" features:
\begin{equation}
    \Delta(\vx, \vx') = \left( \lVert\vx - \vx'\rVert_2^2, \vx \cdot \vx', \lVert \vx \rVert_2^2, \lVert \vx' \rVert_2^2, s(\vx - \vx'), s(\vx), s(\vx') \right) \in \R^{13},
\end{equation}
where $s(\vx) = (x_1^2, x_2^2, x_3^2)$ denotes an element-wise squaring operation. The first four features above are E(3)-invariant, while the last three are only reflection-invariant. Features that are not translation-invariant, such as $\vx \cdot \vx'$, were used as suggested by \cite{vignac2023midi}. Then, we perform the following operations within $\texttt{EQBlock}(\mX, \mH, \mX^{(0)})$ to produce an updated point cloud $\mX'$:
\begin{align}
    \vm_{j \to i} &\gets \texttt{MLP}(\vh_i, \vh_j, \Delta(\vx_i, \vx_j), \Delta(\vx^{(0)}_i, \vx^{(0)}_j)), \quad \text{for all } i, j, \label{eq:messagemlp} \\
    \vx_i' &\gets \sum_{j = 1, \, j \neq i}^N \left(\frac{\vx_i - \vx_j}{||\vx_i - \vx_j||_2^2 + 1}\right) \odot \texttt{Lin}(\vm_{j \to i}), \quad \text{for all } i, \label{eq:coordupdate}
\end{align}
where $\vx_i$, $\vh$, $(\vx^{(0)}_i$, and $\vx'_i$ denote the $i$-th row of $\mX$, $\mH$, $\mX^{(0)}$, and $\mX'$; and $\odot$ denotes a Hadamard product. In the \Eqref{eq:messagemlp} MLP, we use two linear layers with dimensions $d_{\text{in}} \to 320 \to 320$ and SiLU activations for each one. In \Eqref{eq:coordupdate}, we use a linear layer that down-projects with dimensions $320 \to 3$. We also output node features $\mH'$ using an attention-like update:
\begin{align}
    a_{j \to i}, \vv_{j \to i} &\gets \texttt{Lin}(\vm_{j \to i}), \quad \text{for all } i, j, \\
    \vo_i &\gets \sum_{j = 1,\, j \neq i}^N \left(\frac{\exp(a_{j \to i})}{\sum_{k = 1,\, k \neq i}^N \exp(a_{k \to i})} \right) \vv_{j \to i}, \quad \text{for all } i, \\
    \vh_i' &\gets \texttt{Lin}(\vo_i), \quad \text{for all } i,
\end{align}
where $\vh'_i$ denotes the $i$-th row of $\mH'$. In practice, we extend the above equations to employ multiple smaller heads (8 heads, 32-dim each), analogous to multi-headed attention.

\subsection{Training and Inference Details}
\label{appendix:training}

We use the same diffusion noise schedule and number of diffusion steps ($T = 1000$) as \cite{hoogeboom2022equivariant}. Table \ref{tab:my_label} gives the training hyperparameters of \name{} on each dataset. To handle the larger molecules in GEOM, we conduct distributed training over 8 GPUs. On both datasets, we use the Adam optimizer \citep{kingma2015adam} with no weight decay; a linear learning rate warmup over 2000 steps \citep{ma2021on}; and an adaptive gradient clipping strategy from \cite{hoogeboom2022equivariant}, whereby we clip the gradient norm at $1.5\mu + 2\sigma$, where $\mu$ and $\sigma$ are the mean and standard deviation of the gradient norms of the 50 previous optimizer steps. For sampling, we use an exponential moving average (EMA) of the network parameters.

\begin{table}[h]
    \centering
    \begin{tabular}{l|cc}
        \toprule 
         \textbf{Hyperparameter} & \textbf{QM9} & \textbf{GEOM} \\
         \midrule 
         Number of GPUs & 1 & 8 \\ 
         GPU model (NVIDIA) &  Quadro RTX 6000 &  TITAN V\\ 
         GPU memory & 24 GB & 8 $\times$ 12 GB \\
         Epochs & 6000 & 200 \\
         Training time & 130 h & 126 h \\
         Training steps & 1.24M & 0.46M \\
         Effective batch size  & 512 & $240$ \\
         Coordinate dropout $[p_{\text{min}}, p_{\text{max}}]$ & $[0, 1]$  &  $[0, 0.5]$ \\
         Optimizer & Adam & Adam \\ 
         Learning rate & $4 \times 10^{-4}$ & $2 \times 10^{-4}$ \\
         Learning rate warmup steps & 2000 & 2000 \\
         Gradient clipping & Yes & Yes  \\
         EMA decay & 0.999 & 0.9995 \\
         \bottomrule
    \end{tabular}
    \caption{Training hyperparameters of \name{} 
 on QM9 and GEOM.}
    \label{tab:my_label}
\end{table}

A detailed workflow of predicting 3D structure at inference time is depicted in Figure \ref{fig:workflow}. Generating $33 \times 100$ samples for the predictions of the literature dataset required 14 minutes, or about 25.5 seconds per example, on a single NVIDIA RTX A6000 GPU with 48 GB RAM.

\begin{figure}[h]
\begin{center}
\includegraphics[width=\textwidth]{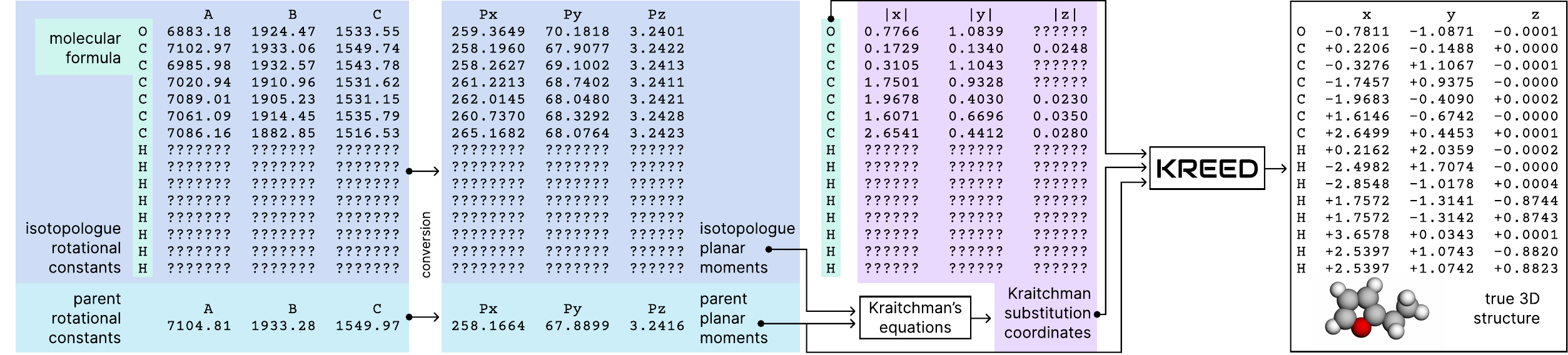}
\caption{Explicit workflow for all-atom 3D structure prediction given molecular formula and parent and isotopologue rotational constants.}
\label{fig:workflow}
\end{center}
\end{figure}

\newpage

\newpage

\section{Additional Results on QM9 and GEOM}
\label{appendix:results}

As seen in Table \ref{table:extra_results}, the genetic algorithm (GA) performs considerably better when only considering the correctness of the heavy atoms, indicating that the method of adding hydrogens is not satisfactory. However, the GA is still unable to deal with examples with many atoms or that have a significant number of missing substitution coordinates. This can be attributed to inefficient search of continuous coordinates, as well as the i.i.d.\ assumption of the pairwise distance NLL term breaking down for examples with many atoms.

\begin{table}[h]
  \centering
  \setlength{\tabcolsep}{7pt}
  \begin{tabular}{c l c c c c c c}
    \toprule
    & & \multicolumn{3}{c}{\textbf{Correctness (\%)}}  & \multicolumn{3}{c}{\textbf{Heavy Correctness (\%)}}  \\
    \textbf{Method} & \textbf{Task} & $k = 1$ & $k = 5$ & $k = 10$ & $k = 1$ & $k = 5$ & $k = 10$ \\
    \midrule %
    & QM9 & 7.33 & 12.4 & 15.0 & 25.4 & 39.6 & 46.4 \\
    Genetic & QM9-C & 0.127 & 0.225 & 0.337 & 0.262 & 0.584 & 0.914 \\
    algorithm & GEOM & 0.0308 & 0.0377 & 0.0377 & 0.158 & 0.253 & 0.318 \\
    & GEOM-C & 0.00342 & 0.00685 & 0.00685 & 0.00342 & 0.00685 & 0.0103 \\
    \midrule
    \multirow{4}{*}{\name{}}  & QM9 & 99.9 & 99.9 & 99.9 & 99.9 & 100. & 100. \\
     & QM9-C & 91.3 & 93.1 & 93.8 & 92.6 & 94.8 & 95.8 \\
     & GEOM & 98.9 & 99.2 & 99.2 & 99.4 & 99.5 & 99.5 \\
    & GEOM-C & 32.6 & 35.8 & 37.0 & 37.0 & 41.6 & 42.9 \\
    \bottomrule
  \end{tabular}
  \caption{Top-$k$ all-atom and heavy-atom connectivity correctness for both the GA and \name{}.}
  \label{table:extra_results}
\end{table}

\subsection{No Substitution Coordinates}
\label{appendix:no_sub}
Even if provided with \emph{only} molecular formula and moments, \name{} can still obtain reasonable accuracy on QM9.
$K=100$ samples were generated for each example, and generated predictions were ranked by deviation from the true moments.
Given that 10 times more samples were generated per example, these tasks were only evaluated for a small portion of each test set to minimize computational cost.
Top-1 all-atom connectivity correctness was 27.9\% for QM9 ($n=1335$), but is 0.273\% for GEOM ($n=1464$).
It was initially unexpected that accuracy on QM9 could be much greater than 0, given that this task provides very few input constraints.
However, since molecules in QM9 are much smaller, it makes sense that there are only a few stable molecules with a given molecular formula and set of moments of inertia.
In addition, we set $p_{\textrm{max}}=1$ when training on QM9, so that some examples seen during training had almost all substitution coordinates dropped, but $p_{\textrm{max}}=0.5$ when training on GEOM.
It is conceivable that accuracy on this task could be raised for GEOM by increasing $p_{\textrm{max}}$, however, we suffered training instabilities when doing so.

\begin{table}[H]
\centering
    \begin{tabular}{l c c c c c c c c}
    \toprule
    & \multicolumn{4}{c}{\textbf{Correctness (\%)}}  & \multicolumn{4}{c}{\textbf{Heavy Correctness (\%)}}  \\
    \textbf{Task} & $k = 1$ & $k = 5$ & $k = 10$ & $k = 100$ & $k = 1$ & $k = 5$ & $k = 10$ & $k = 100$\\
    \midrule
QM9 & 27.9 & 39.4 & 42.1 & 48.8 & 29.2 & 42.3 & 46.4 & 56.1 \\
GEOM & 0.273 & 0.342 & 0.342 & 0.546 & 0.273 & 0.410 & 0.478 & 0.820 \\
\bottomrule
    \end{tabular}
    \caption{Top-$k$ all-atom and heavy-atom connectivity correctness of \name{} when provided with \emph{no} substitution coordinates.}
    \label{table:no_sub}
\end{table}

\subsection{Visualizations}

In the following figures, a green background indicates all-atom connectivity correctness, a yellow background indicates heavy atom connectivity correctness, and a red background indicates incorrectness. The presence of black pins indicates whether a substitution coordinate in that direction was available.

\vspace{\fill}
\clearpage

\begin{figure}[hp]
\begin{center}
\includegraphics[width=\textwidth]{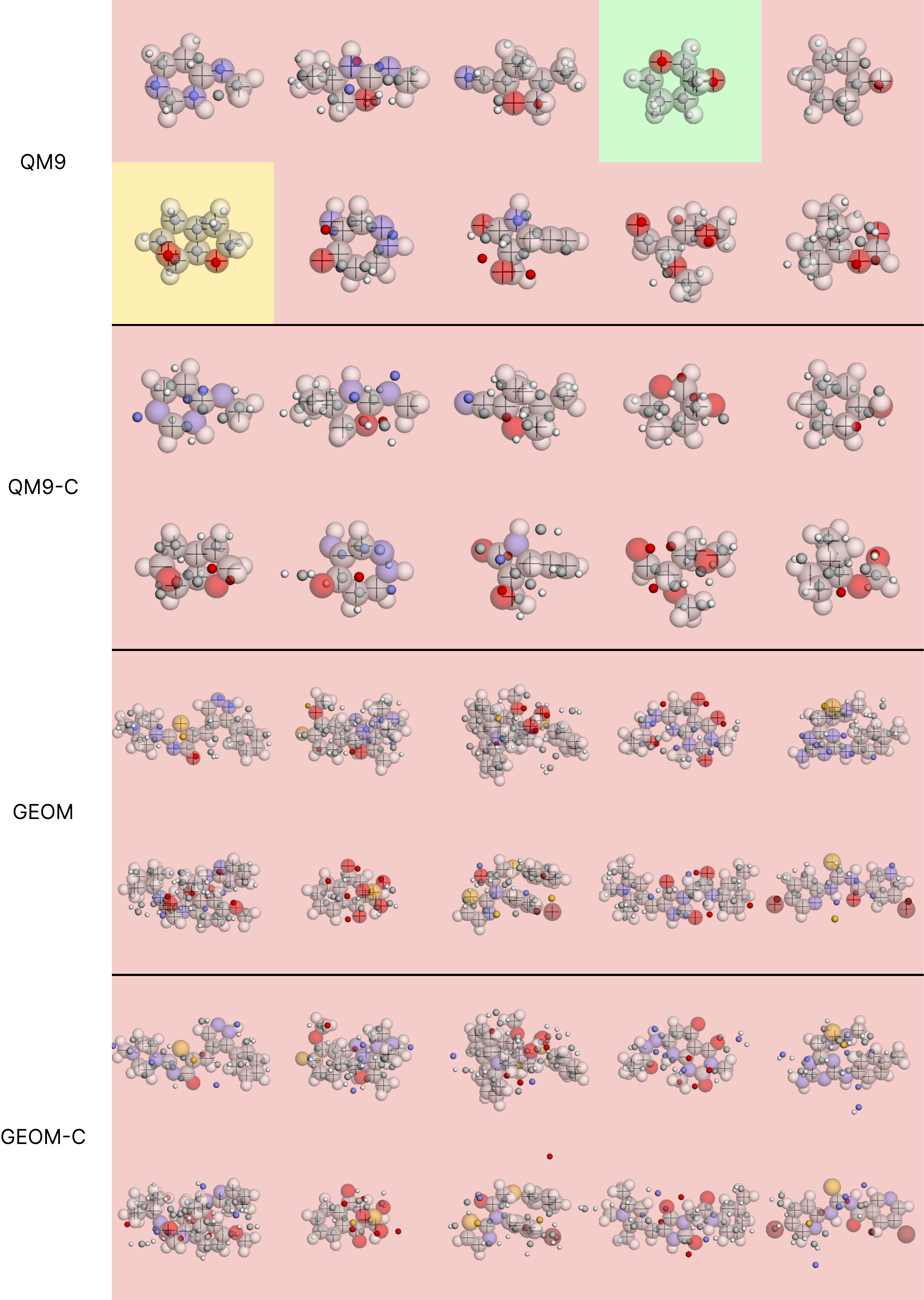}
\caption{Top-1 predictions of the GA on random test set examples.}
\label{fig:baseline-samples}
\end{center}
\end{figure}

\newpage

\begin{figure}[hp]
\begin{center}
\includegraphics[width=\textwidth]{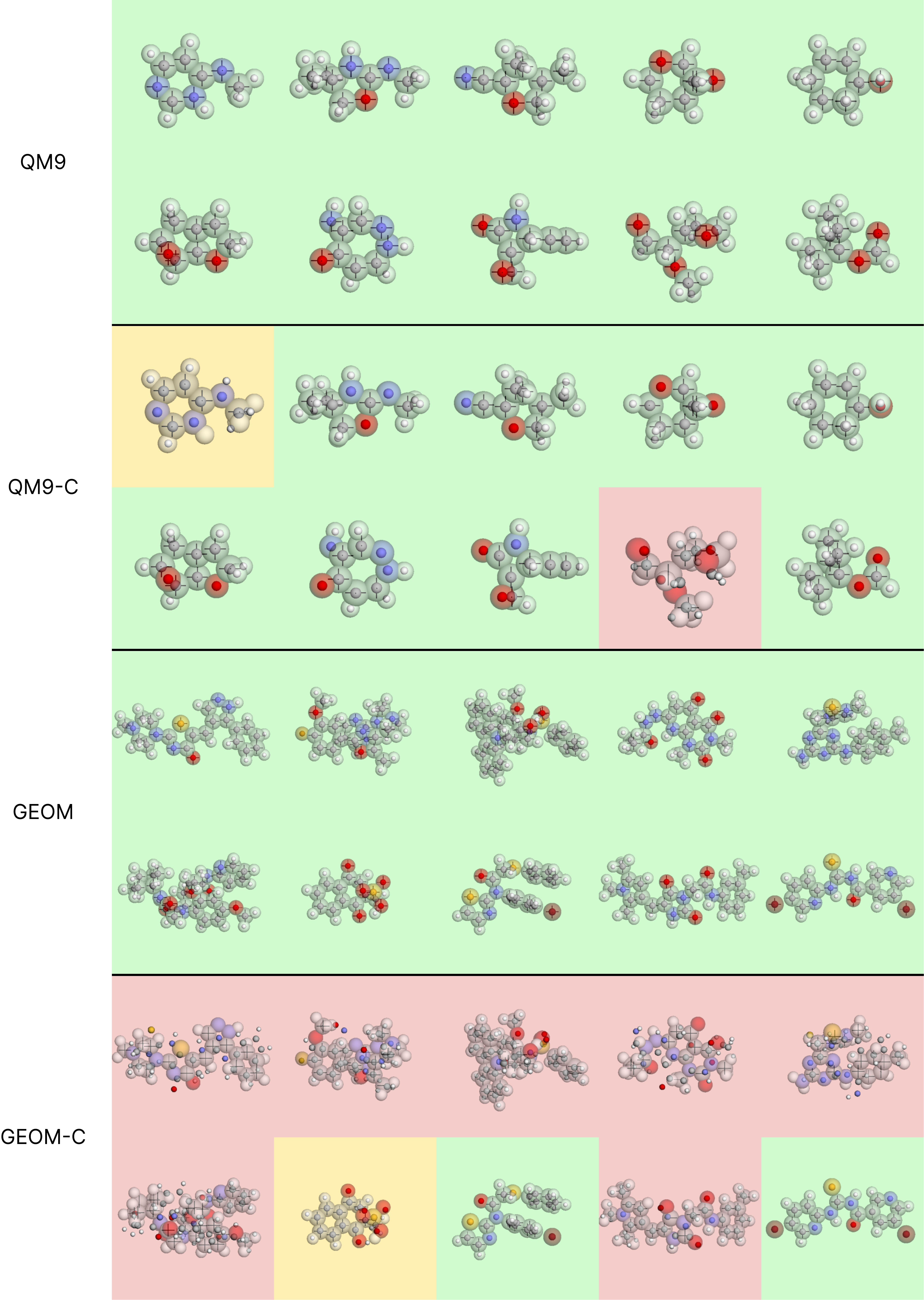}
\caption{Top-1 predictions of \name{} on random test set examples.}
\label{fig:ddpm-samples}
\end{center}
\end{figure}

\newpage

\section{Experimentally Measured Substitution Coordinates}
\label{appendix:lit_results}

\begin{figure}[H]
\begin{center}
\includegraphics[width=\textwidth]{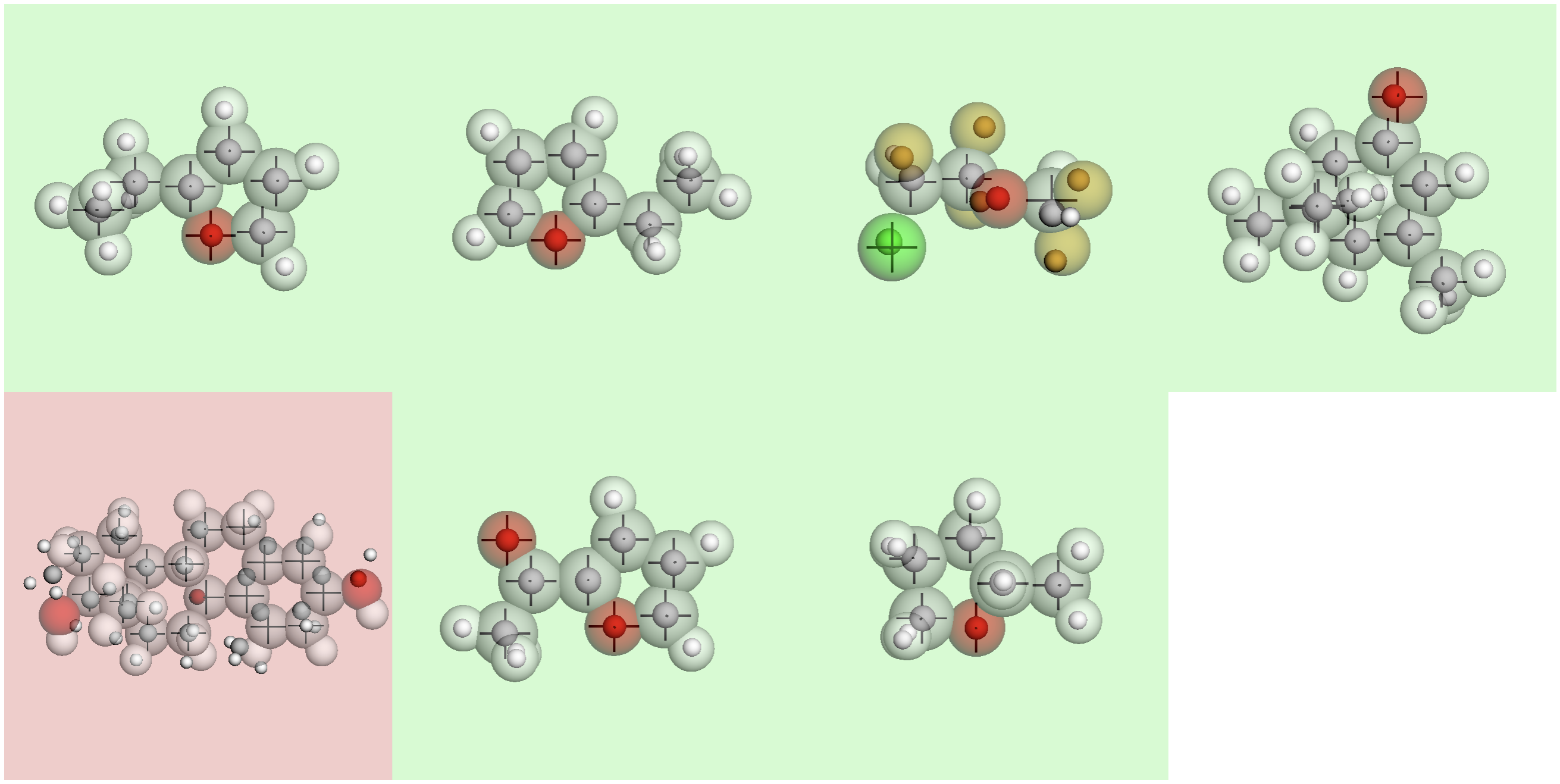}
\caption{Top-1 predictions of \name{} given experimental substitution coordinates of conformers which are very similar to examples in QM9 or GEOM.}
\label{fig:in-samples}
\end{center}
\end{figure}

\begin{figure}[H]
\begin{center}
\includegraphics[width=\textwidth]{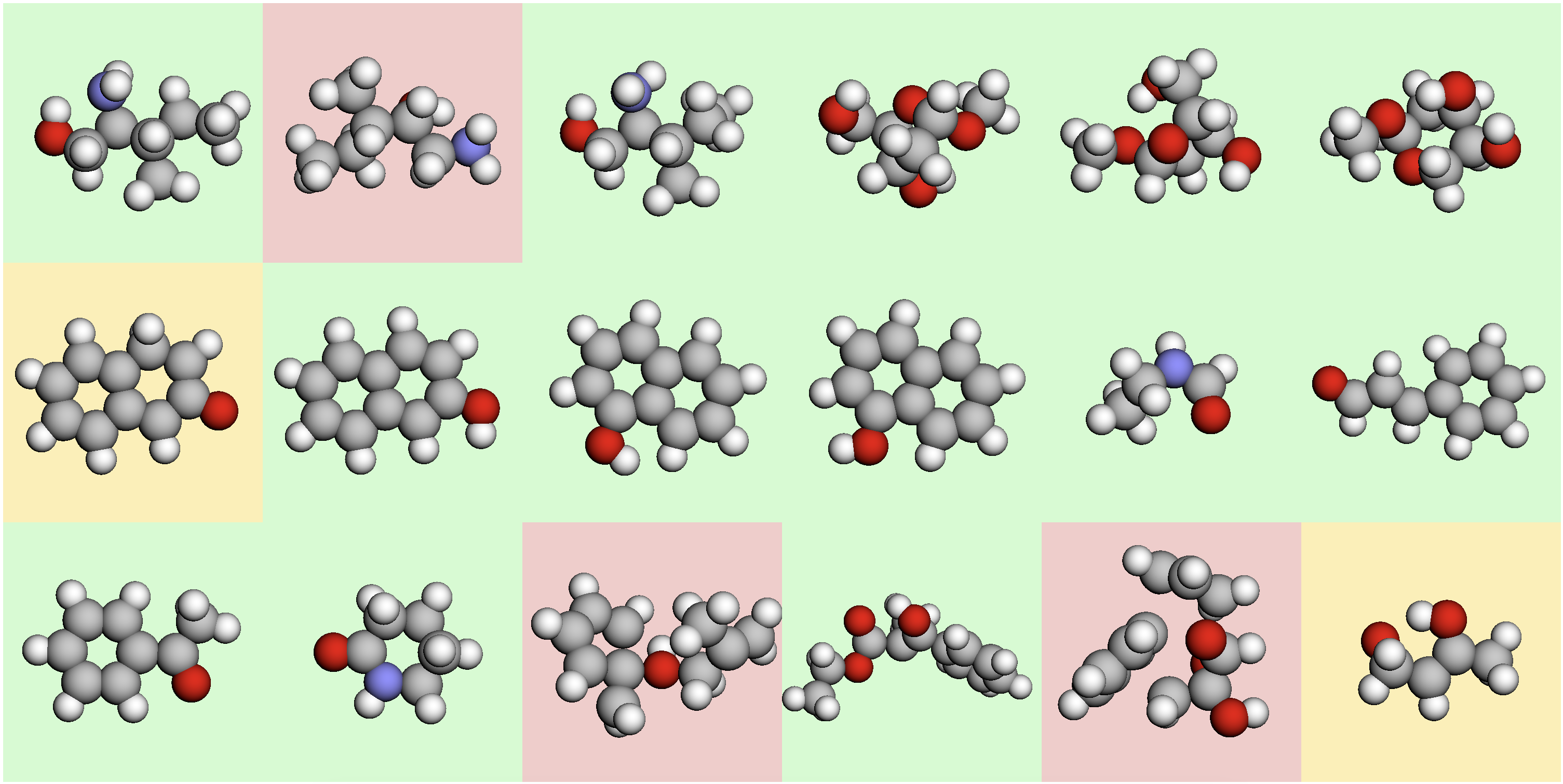}
\caption{Top-1 predictions of \name{} given experimental substitution coordinates of conformers which did not report a ground truth structure in Cartesian coordinates in their original paper. Predictions were manually verified by visual comparison to pictures in the original paper.}
\label{fig:no-gt-samples}
\end{center}
\end{figure}

\begin{table}[p]
    \centering
    \begin{tabular}{c|c|c}
\toprule
\textbf{Conformer Name} & \textbf{Reference} & \textbf{Figure} \\ 
\midrule
2,3,4-trifluorotoluene & \cite{nair2020internal} & Figure \ref{fig:ood-samples} \\
2,4,5-trifluorotoluene & \cite{nair2020internal} & Figure \ref{fig:ood-samples} \\
2'-aminoacetophenone & \cite{salvitti2022probing} & Figure \ref{fig:ood-samples} \\
cyclopropyl (isocyanato) silane (gauche) & \cite{guirgis2015molecular} & Figure \ref{fig:ood-samples} \\
cyclododecanone & \cite{burevschi2021seven} & Figure \ref{fig:ood-samples} \\
cycloundecanone & \cite{tsoi2022conformational} & Figure \ref{fig:ood-samples} \\
linalool & \cite{quesada2019analysis} & Figure \ref{fig:ood-samples} \\
a-pinene oxide & \cite{neeman2023gas} & Figure \ref{fig:ood-samples} \\
2-ethylfuran (C1) & \cite{nguyen2020heavy} & Figure \ref{fig:in-samples} \\
2-ethylfuran (Cs) & \cite{nguyen2020heavy} & Figure \ref{fig:in-samples} \\
enflurane (G+) & \cite{perez2016conformational} & Figure \ref{fig:in-samples} \\
verbenone & \cite{marshall2017rotational} & Figure \ref{fig:in-samples} \\
beta-estradiol (tg(+)) & \cite{zinn2018flexibility} & Figure \ref{fig:in-samples} \\
2-acetylfuran (anti) & \cite{dindic2021equilibrium} & Figure \ref{fig:in-samples} \\
2-methyltetrahydrofuran (equatorial) & \cite{van2016heavy} & Figure \ref{fig:in-samples} \\
isoleucinol (I) & \cite{fatima2020benchmarking} & Figure \ref{fig:no-gt-samples} \\
isoleucinol (II) & \cite{fatima2020benchmarking} & Figure \ref{fig:no-gt-samples} \\
isoleucinol (III) & \cite{fatima2020benchmarking} & Figure \ref{fig:no-gt-samples} \\
2-deoxy-d-riboside (af-1) & \cite{calabrese2020observation} & Figure \ref{fig:no-gt-samples} \\
2-deoxy-d-riboside (bf-1) & \cite{calabrese2020observation} & Figure \ref{fig:no-gt-samples} \\
2-deoxy-d-riboside (ap-1) & \cite{calabrese2020observation} & Figure \ref{fig:no-gt-samples} \\
trans-2-naphthol & \cite{hazrah2022structural} & Figure \ref{fig:no-gt-samples} \\
cis-2-naphthol & \cite{hazrah2022structural} & Figure \ref{fig:no-gt-samples} \\
cis-1-naphthol & \cite{hazrah2022structural} & Figure \ref{fig:no-gt-samples} \\
trans-1-naphthol & \cite{hazrah2022structural} & Figure \ref{fig:no-gt-samples} \\
N-ethylformamide (trans-ac) & \cite{ohba2005fourier} & Figure \ref{fig:no-gt-samples} \\
trans-cinnamaldehyde (s-trans-trans) & \cite{zinn2015structure} & Figure \ref{fig:no-gt-samples} \\
acetophenone & \cite{lei2019conformational} & Figure \ref{fig:no-gt-samples} \\
delta-valerolactam & \cite{bird2012chirped} & Figure \ref{fig:no-gt-samples} \\
trans-thymol-A & \cite{quesada2019analysis} & Figure \ref{fig:no-gt-samples} \\
strawberry aldehyde (t-aa) & \cite{shipman2011structure} & Figure \ref{fig:no-gt-samples} \\
strawberry aldehyde (c-sg-) & \cite{shipman2011structure} & Figure \ref{fig:no-gt-samples} \\
4-hydroxy-2-butanone (I) & \cite{li2022aqueous} & Figure \ref{fig:no-gt-samples} \\
\bottomrule
\end{tabular}
\caption{All 33 conformer examples with substitution coordinates extracted from the literature.}
\label{table:literature}
\end{table}

\end{document}